\newtheorem{theorem}{Theorem}
\newtheorem{lemma}[theorem]{Lemma}
\newtheorem{proposition}[theorem]{Proposition}
\newtheorem{observation}[theorem]{Observation}
\newtheorem{definition}[theorem]{Definition}
\newtheorem{assumption}{Assumption}
\newcommand{\va}{\mathbf{a}}
\newcommand{\vc}{\mathbf{c}}
\newcommand{\vF}{\mathbf{F}} 
\newcommand{\vh}{\mathbf{h}}
\newcommand{\vx}{\mathbf{x}}
\newcommand{\vb}{\mathbf{b}}
\newcommand{\vg}{\mathbf{g}}
\newcommand{\vr}{\mathbf{r}}
\newcommand{\vS}{\mathbf{S}}
\newcommand{\vv}{\mathbf{v}}
\newcommand{\vz}{\mathbf{z}}
\newcommand{\vzero}{\mathbf{0}}
\DeclareMathOperator*{\argmin}{arg\,min}
\DeclareMathOperator{\sgn}{sgn}
\newcommand{\RR}{\mathbb{R}}      
\newcommand{\vnorm}[1]{\left\lVert#1\right\rVert} 
\newcommand{\ifn}{\mathbf{1}} 
\newcommand{\abs}[1]{\left| #1 \right|}
\newcommand{\deriv}[2]{\frac {d \left[ #1 \right]} {d #2}}
\newcommand{\cH}{\mathcal{H}}
\newcommand{\lrp}[1]{\left(#1\right)}
\newcommand{\lrb}[1]{\left[#1\right]}
\newcommand{\lrsetb}[1]{\left\{#1\right\}}
\newcommand{\authcmt}[2]{\textcolor{#1}{#2}}
\newcommand{\akshay}[1]{\authcmt{red}{[AB: #1]}}
\def\blfootnote{\xdef\@thefnmark{}\@footnotetext}
\title{
Optimal Binary Classifier Aggregation for General Losses
}
\author{
  Akshay Balsubramani \\
  University of California, San Diego\\
  \url{abalsubr@ucsd.edu}
  \And
  Yoav Freund \\
  University of California, San Diego\\
  \url{yfreund@ucsd.edu}
}
\begin{document}

\maketitle

\begin{abstract}
We address the problem of aggregating an ensemble of predictors with known loss bounds in a semi-supervised binary classification setting, to minimize prediction loss incurred on the unlabeled data. We find the minimax optimal predictions for a very general class of loss functions including all convex and many non-convex losses, extending a recent analysis of the problem for misclassification error. The result is a family of semi-supervised ensemble aggregation algorithms which are as efficient as linear learning by convex optimization, but are minimax optimal without any relaxations. Their decision rules take a form familiar in decision theory -- applying sigmoid functions to a notion of ensemble margin -- without the assumptions typically made in margin-based learning. 
\end{abstract}

\section{Introduction}

Consider a binary classification problem, 
in which we are given an ensemble of individual classifiers to aggregate into the most accurate predictor possible for data falling into two classes. 
Our predictions are measured on a large test set of unlabeled data, 
on which we know the ensemble classifiers' predictions but not the true test labels. 
Without using the unlabeled data, the prototypical supervised solution is empirical risk minimization (ERM): 
measure the errors of the ensemble classifiers with labeled data, and then simply predict according to the best classifier. 
But can we learn a better predictor by using unlabeled data as well?

This problem is central to semi-supervised learning. 
The authors of this paper recently derived the worst-case-optimal solution for it when performance is measured with classification error (\cite{BF15}). 
However, this zero-one loss is inappropriate for other common binary classification tasks, 
such as estimating label probabilities, and handling false positives and false negatives differently. 
Such goals motivate the use of different evaluation losses like log loss and cost-weighted misclassification loss. 

In this paper, we generalize the setup of \cite{BF15} to these loss functions and a large class of others. 
Like the earlier work, the choice of loss function completely specifies the minimax optimal
ensemble aggregation algorithm in our setting, which is efficient and scalable. 

The algorithm learns weights over the ensemble classifiers by minimizing a convex function. 
The optimal prediction on each example in the test set is a sigmoid-like function of a linear combination of the ensemble predictions, using the learned weighting. 
Due to the minimax structure, this decision rule depends solely upon the loss function 
and upon the structure of the ensemble predictions on data, with no parameter or model choices. 

\subsection{Preliminaries}

Our setting generalizes that of \cite{BF15}, 
in which we are given an ensemble $\cH = \{ h_1, \dots, h_p \}$ and unlabeled (test) examples $x_1, \dots, x_n$ on which to predict. 
The ensemble's predictions on the unlabeled data are written as a matrix $\vF$:
\begin{equation}
\label{eq:defoff}
\vF = 
 \begin{pmatrix}
   h_1(x_1) & h_1(x_2) & \cdots & h_1 (x_n) \\
   \vdots   & \vdots    & \ddots &  \vdots  \\
   h_p(x_1)  &  h_p (x_2)  & \cdots &  h_p (x_n)
 \end{pmatrix}
\end{equation}
We use vector notation for the rows and columns of $\vF$: 
$\vh_i = (h_i(x_1), \cdots, h_i (x_n))^\top$ and $\vx_j =
(h_1(x_j), \cdots, h_p (x_j))^\top$. 
Each example $j \in [n]$ has a binary label $y_j \in \{-1,1\}$,  
but the test labels are allowed to be randomized, 
represented by values in $[-1,1]$ instead of just the two values $\{ -1, 1\}$; 
e.g. $z_i = \frac{1}{2}$ indicates $y_i = +1\;\text{w.p.}\; \frac{3}{4} $ and $-1\; \text{w.p.}\; \frac{1}{4} $. 
So the labels on the test data can be represented by $\vz = (z_1; \dots; z_n) \in [-1,1]^n$, 
and are unknown to the predictor, which predicts $\vg = (g_1; \dots; g_n) \in [-1,1]^n$.

\subsection{Loss Functions}

We incur loss on test example $j$ according to its true label $y_j$. 
If $y_j = 1$, then the loss of predicting $g_j \in [-1,1]$ on it is some function $\ell_{+} (g_j)$; 
and if $y_j = -1$, then the loss is $\ell_{-} (g_j)$. 
To illustrate, if the loss is the expected classification error, 
then $g_j \in [-1,1]$ can be interpreted as a randomized binary prediction in the same way as $z_j$, so that 
$\ell_{+} (g_j) = \frac{1}{2} \lrp{1 - g_j}$ and $\ell_{-} (g_j) = \frac{1}{2} \lrp{1 + g_j}$. 

We call $\ell_{\pm}$ the \emph{partial losses} here, following earlier work (e.g. \cite{RW10}). 
Since the true label can only be $\pm 1$, the partial losses fully specify the decision-theoretic problem we face, 
and changing them is tantamount to altering the prediction task. 

What could such partial losses conceivably look like in general? 
Observe that they intuitively measure discrepancy to the true label $\pm 1$. 
Consequently, it is natural for e.g. $\ell_{+} (g)$ to be decreasing, as $g$ increases toward the notional true label $+1$. 
This suggests that both partial losses $\ell_{+} (\cdot)$ and $\ell_{-} (\cdot)$ would be monotonic, which we assume hereafter in this paper 
(throughout we use \emph{increasing} to mean ``monotonically nondecreasing" and vice versa). 
\begin{assumption}
\label{ass:loss}
Over the interval $(-1,1)$, $\ell_{+} (\cdot)$ is decreasing and $\ell_{-} (\cdot)$ is increasing, 
and both are twice differentiable.
\end{assumption}
We view Assumption \ref{ass:loss} as very mild, as motivated above.  
Notably, convexity or symmetry of the partial losses are not required. 
In this paper, ``general losses" refer to loss functions whose partial losses satisfy Assumption \ref{ass:loss}, to contrast them with convex losses or other subclasses. 

The expected loss incurred w.r.t. the \emph{randomized} true labels $z_j$ is a linear combination of the partial losses: 
\begin{align}
\label{eq:defofexploss}
\ell (z_j, g_j) := \lrp{\frac{1+z_j}{2}} \ell_{+} (g_j) + \lrp{\frac{1-z_j}{2}} \ell_{-} (g_j)
\end{align}

Decision theory and learning theory have thoroughly investigated the nature of the loss $\ell$ and its partial losses, 
particularly how to estimate the ``conditional label probability" $z_j$ using $\ell (z_j, g_j)$. 
A natural operation to do this is to minimize the loss over $g_j$, 
and a loss $\ell$ such that $\displaystyle \argmin_{g \in [-1,1]} \ell (z_j, g) = z_j$ (for all $z_j \in [-1,1]$) 
is called a \emph{proper loss} (\cite{S71, RW10}).


\subsection{Minimax Formulation}
\label{sec:zerooneconstr}


As in \cite{BF15}, we formulate the ensemble aggregation problem as a two-player zero-sum game between a predictor and an adversary.
In this game, the first player is the predictor, playing predictions over the test set $\vg \in [-1,1]^n$. 
The adversary then sets the true labels $\vz \in [-1,1]^n$. 


The key idea is that any ensemble constituent $i \in [p]$ known to have low loss on the test data 
gives us information about the unknown $\vz$, as $\vz$ is constrained to be ``close" to the test predictions $\vh_i$. 
Each hypothesis in the ensemble represents such a constraint, and $\vz$ is in the intersection of all these constraint sets, 
which interact in ways that depend on the ensemble predictions $\vF$.

Accordingly, for now assume the predictor knows a vector of label correlations 
$\vb$ such that
\begin{align}
\label{eq:errconstr}
\forall i \in [p] : \qquad \frac{1}{n} \sum_{j=1}^n h_i (x_j) z_j \geq b_i 
\end{align}
i.e. $ \frac{1}{n} \vF \vz \geq \vb$. 
When the ensemble is composed of binary classifiers which predict in $[-1,1]$, 
these $p$ inequalities represent upper bounds on individual classifier error rates. 
These can be estimated from the training set w.h.p. when the training and test data are i.i.d. using uniform convergence, 
exactly as in the prototypical supervised ERM procedure discussed in the introduction (\cite{BEHW87}).
So in our game-theoretic formulation, 
the adversary plays under ensemble constraints defined by $\vb$. 

The predictor's goal is to 
\emph{minimize the worst-case expected loss of $\vg$ on the test data} 
(w.r.t. the randomized labeling $\vz$), using the loss function as defined earlier in \Cref{eq:defofexploss}: 
$$ \ell (\vz, \vg) := \frac{1}{n} \sum_{j=1}^{n} \ell (z_j, g_j) $$
This goal can be written as the following optimization problem, a two-player zero-sum game:
\begin{align}
\label{game1eq} 
V &:= \min_{\vg \in [-1,1]^n} \; \max_{\substack{ \vz \in [-1,1]^n , \\ \frac{1}{n} \vF \vz \geq \vb }} \; \ell (\vz, \vg) \\
\label{eq:linearinz} 
&= \min_{\vg \in [-1,1]^n} \; \max_{\substack{ \vz \in [-1,1]^n , \\ \frac{1}{n} \vF \vz \geq \vb }} \;\; 
\frac{1}{n} \sum_{j=1}^{n} \lrb{ \lrp{\frac{1+z_j}{2}} \ell_{+} (g_j) + \lrp{\frac{1-z_j}{2}} \ell_{-} (g_j) }
\end{align}

In this paper, we solve the learning problem faced by the predictor, 
finding an optimal strategy $\vg^*$ realizing the minimum in \eqref{game1eq} for any given ``general loss" $\ell$. 
This strategy guarantees the best possible worst-case performance on the unlabeled dataset, 
with an upper bound of $V$ on the loss. 
Indeed, for all $\vz_0$ and $\vg_0$ obeying the constraints, 
\Cref{game1eq} implies the tight inequalities
\begin{align} 
\label{eq:weakduality}
\min_{\vg \in [-1,1]^n} \; \ell (\vz_0, \vg) 
\stackrel{(a)}{\leq} V 
\leq \max_{\substack{ \vz \in [-1,1]^n , \\ \frac{1}{n} \vF \vz \geq \vb }} \; \ell (\vz, \vg_0)
\end{align}
and $\vg^*$ attains the equality in $(a)$, with a worst-case loss as good as \emph{any} aggregated predictor.

In our formulation of the problem, the constraints on the adversary take a central role. 
As discussed in previous work with this formulation (\cite{BF15, BF15b}), 
these constraints encode the information we have about the true labels. 
Without them, the adversary would find it optimal to trivially guarantee error (arbitrarily close to) $\frac{1}{2}$ 
by simply setting all labels uniformly at random ($\vz = \vzero^{n}$). 
It is clear that adding information through more constraints will never raise the error bound $V$. 
\footnote{However, it may pose difficulties in estimating $\vb$ by applying uniform convergence over a larger $\cH$ (\cite{BF15b}).}

Nothing has yet been assumed about $\ell (\vz, \vg)$ other than Assumption \ref{ass:loss}. 
Our main results will require only this, holding for general losses. 
This brings us to this paper's contributions:
\begin{enumerate}
\item
We give the exact minimax $\vg^* \in [-1,1]^n$ for general losses (\Cref{sec:gamesoln}). 
The optimal prediction on each example $j$ is a sigmoid function of a fixed linear combination of the ensemble's $p$ predictions on it, 
so $\vg^*$ is a non-convex function of the ensemble predictions. 
By \eqref{eq:weakduality}, this incurs the lowest worst-case loss of any predictor constructed with the ensemble information $\vF$ and $\vb$. 
\item
We derive an efficient algorithm for learning $\vg^*$, by 
solving a $p$-dimensional convex optimization problem. 
This applies to a broad class of losses (cf. Lem. \ref{lem:cvxpotential}), 
including any with convex partial losses. 
Sec. \ref{sec:binclass} develops and discusses the results. 
\item
We extend the optimal $\vg^*$ and efficient learning algorithm for it, as above, to a large variety of more general ensembles and prediction scenarios (Sec. \ref{sec:extensions}), including constraints arising from general loss bounds, and ensembles of ``specialists" and heterogeneous features.
\end{enumerate}


\section{Results for Binary Classification}
\label{sec:binclass}


Based on the loss, define the function $ \Gamma : [-1,1] \mapsto \RR$ as 
$ \Gamma (g) := \ell_{-} (g) - \ell_{+} (g) $.  
(We also write the vector $\Gamma (\vg) $ componentwise with $[\Gamma (\vg)]_j = \Gamma (g_j)$ for convenience, 
so that $\Gamma (\vh_i) \in \RR^n $ and $\Gamma (\vx_j) \in \RR^p $.)
Observe that by Assumption \ref{ass:loss}, $\Gamma (g)$ is increasing on its domain; so we can discuss its inverse $\Gamma^{-1} (m)$,
which is typically sigmoid-shaped, as will be illustrated. 

With these we will set up the solution to the game \eqref{game1eq}, which relies on a convex function.
\begin{definition}[Potential Well]
Define the \textbf{potential well}
\begin{align*}
\Psi (m) := 
\begin{cases} 
- m + 2 \ell_{-} (-1)  \qquad & \mbox{ \; if \; } m \leq \Gamma (-1) \\ 
\ell_{+} (\Gamma^{-1} (m)) + \ell_{-} (\Gamma^{-1} (m))  \qquad & \mbox{ \; if \; } m \in \lrp{ \Gamma (-1) , \Gamma (1)} \\ 
m + 2 \ell_{+} (1)  & \mbox{ \; if \; } m \geq \Gamma (1)
\end{cases}
\end{align*}
\end{definition}

\begin{lemma}
\label{lem:cvxpotential}
The potential well $\Psi (m)$ is continuous and 1-Lipschitz. 
It is also convex under \emph{any} of the following conditions:
\begin{enumerate}[(A),noitemsep]
\item
The partial losses $\ell_{\pm} (\cdot)$ are convex over $(-1,1)$.
\item
The loss function $\ell (\cdot, \cdot)$ is a proper loss. 
\item
$\ell_{-}' (x) \ell_{+}'' (x) \geq \ell_{-}'' (x) \ell_{+}' (x)$ for all $x \in (-1,1)$.
\end{enumerate}
Condition (C) is also necessary for convexity of $\Psi$, under \Cref{ass:loss}. 
\end{lemma}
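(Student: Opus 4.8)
The plan is to establish the three assertions in turn---continuity, the $1$-Lipschitz bound, and convexity---with convexity reduced to a single ``master'' sufficient condition, namely (C), which I then derive separately from (A) and from (B). For continuity and the Lipschitz bound I would work piecewise. The two outer branches of $\Psi$ are affine with slopes $-1$ and $+1$, so the only content is the middle branch $\psi(m) := \ell_{+}(\Gamma^{-1}(m)) + \ell_{-}(\Gamma^{-1}(m))$ together with its junctions to the outer branches at $m = \Gamma(-1)$ and $m = \Gamma(1)$. Plugging $\Gamma^{-1}(\Gamma(\pm 1)) = \pm 1$ into the middle branch and using $\Gamma = \ell_{-} - \ell_{+}$ shows it agrees with the outer branches at both breakpoints, so $\Psi$ is continuous (the middle branch being a composition of continuous maps, since $\Gamma$ is continuous and increasing by Assumption~\ref{ass:loss}). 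Wherever $\Gamma' \neq 0$, the chain rule gives
\[
\psi'(m) \;=\; \frac{\ell_{+}'(g) + \ell_{-}'(g)}{\ell_{-}'(g) - \ell_{+}'(g)}, \qquad g := \Gamma^{-1}(m),
\]
and Assumption~\ref{ass:loss} forces $\ell_{+}'(g) \le 0 \le \ell_{-}'(g)$, so the numerator is bounded in absolute value by the nonnegative denominator and $|\psi'(m)| \le 1$; together with the outer slopes $\pm 1$ this gives the $1$-Lipschitz property.

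For convexity, since the outer branches are affine it suffices to show (i) $\psi$ is convex on $(\Gamma(-1), \Gamma(1))$ and (ii) at each breakpoint the left derivative does not exceed the right derivative; (ii) is immediate from the previous step, because the left outer slope $-1$ is $\le \psi'(\Gamma(-1)^+)$ while $\psi'(\Gamma(1)^-) \le 1$, the right outer slope. For (i), one more differentiation and simplification yields
\[
\psi''(m) \;=\; \frac{2\bigl(\ell_{-}'(g)\,\ell_{+}''(g) - \ell_{-}''(g)\,\ell_{+}'(g)\bigr)}{\bigl(\ell_{-}'(g) - \ell_{+}'(g)\bigr)^{3}}, \qquad g := \Gamma^{-1}(m),
\]
whose denominator equals $\Gamma'(g)^{3} \ge 0$; hence $\psi'' \ge 0$ precisely when $\ell_{-}'(g)\ell_{+}''(g) \ge \ell_{-}''(g)\ell_{+}'(g)$, which is exactly condition~(C). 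It then remains to verify (A)$\Rightarrow$(C) and (B)$\Rightarrow$(C). Under (A) we have $\ell_{\pm}'' \ge 0$ while $\ell_{-}' \ge 0 \ge \ell_{+}'$, so $\ell_{-}'\ell_{+}'' \ge 0 \ge \ell_{-}''\ell_{+}'$. Under (B), properness says $g \mapsto \ell(z,g)$ is minimized at $g = z$ for each $z \in (-1,1)$, so the interior stationarity condition $(1+z)\ell_{+}'(z) + (1-z)\ell_{-}'(z) = 0$ holds; differentiating it gives $(1+z)\ell_{+}''(z) + (1-z)\ell_{-}''(z) = \ell_{-}'(z) - \ell_{+}'(z) =: D(z) \ge 0$, and substituting $\ell_{+}'(z) = -\tfrac{1-z}{1+z}\ell_{-}'(z)$ into the left side of (C) turns it into $\ell_{-}'(z)\bigl(\ell_{+}''(z) + \tfrac{1-z}{1+z}\ell_{-}''(z)\bigr) = \tfrac{1}{1+z}\,\ell_{-}'(z)\,D(z) \ge 0$.

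The main obstacle is not any single computation but making the chain-rule manipulations legitimate when $\Gamma$ is only nondecreasing, so that the pseudoinverse of the footnote is in force and $\Gamma'$ may vanish. The resolution I would record is that $\Gamma' = \ell_{-}' - \ell_{+}'$ is a sum of two nonnegative quantities, so $\Gamma'(g) = 0$ forces $\ell_{+}'(g) = \ell_{-}'(g) = 0$; hence on the image of any flat stretch of $\Gamma$ both partial losses---and therefore $\Psi$---are constant, which is harmless for continuity, the Lipschitz bound, and convexity alike, while everywhere else $\Gamma' > 0$ and the displayed formulas apply. A slightly cleaner route that avoids second derivatives is to observe that under (C) the map $g \mapsto \bigl(\ell_{-}'(g)+\ell_{+}'(g)\bigr)/\bigl(\ell_{-}'(g)-\ell_{+}'(g)\bigr)$ is nondecreasing (its numerator after differentiation is $2(\ell_{-}'\ell_{+}'' - \ell_{-}''\ell_{+}')$), and $\psi'$ is this map composed with the nondecreasing $\Gamma^{-1}$, hence nondecreasing---so $\psi$ is convex.
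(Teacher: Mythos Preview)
Your proof is correct and follows essentially the same route as the paper: compute $\psi'(m)$ via the chain rule to get the Lipschitz bound, compute $\psi''(m)$ to reduce convexity of the middle branch to condition~(C), and then separately derive (C) from (A) by sign-chasing and from (B) via the properness stationarity identity. The paper's version differs only in two minor respects worth noting. First, for (B)$\Rightarrow$(C) the paper introduces the weight function $w(x) := \ell_{-}'(x)/(1+x) = -\ell_{+}'(x)/(1-x)$ and computes directly that $\ell_{-}'\ell_{+}'' - \ell_{-}''\ell_{+}' = 2w^2 \ge 0$, whereas you differentiate the stationarity relation itself and substitute; both computations are short and equivalent, though the $2w^2$ form has the cosmetic advantage of making nonnegativity manifest without appealing to the sign of $\ell_{-}'$. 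Second, you are more explicit than the paper on two points the paper glosses over: the breakpoint condition (ii) ensuring the three convex pieces glue into a globally convex function, and the degenerate case $\Gamma'(g) = 0$ where the pseudoinverse is needed. Your handling of both is sound, and the alternative monotone-derivative argument you sketch at the end is a clean way to sidestep any worry about dividing by $\Gamma'$.
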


So the potential wells for different losses are shaped similarly, as seen in \Cref{fig:specaggalg}. 
\Cref{lem:cvxpotential} tells us that the potential well is easy to optimize under any of the given conditions. 
Note that these conditions encompass convex surrogate losses commonly used in ERM, 
including all such ``margin-based" losses (convex univariate functions of $z_j g_j$), 
introduced primarily for their favorable computational properties.

\begin{figure}
 \begin{minipage}[t]{.48\linewidth}
 \vspace{0pt}
\includegraphics[width=\textwidth]{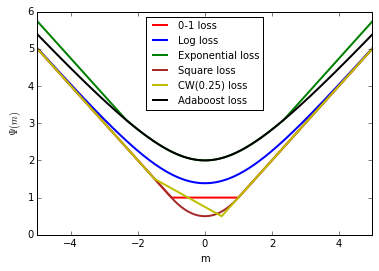}
 \end{minipage}
  \begin{minipage}[t]{.5\linewidth}
     \vspace{0pt}
     \centering
     \includegraphics[width=\textwidth]{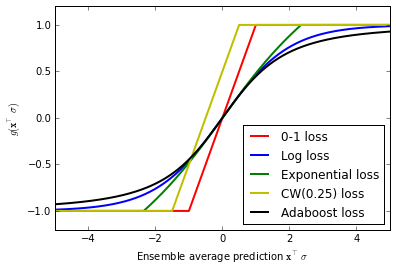}
    \end{minipage}
\caption{\small 
At left are plots of potential wells.
At right are optimal prediction functions $g$, as a function of score. 
Both are shown for various losses, as listed in \Cref{sec:examplelosses}. 
}
\label{fig:specaggalg}
\end{figure}

An easily optimized potential well benefits us, because the learning problem basically consists of optimizing it over the unlabeled data, as we will soon make explicit. 
The function that will actually be optimized is in terms of the dual parameters, so we call it the slack function.

\begin{definition}[Slack Function]
Let $\sigma \geq \vzero^p$ be a weight vector over $\cH$ (not necessarily a distribution).
The vector of \textbf{scores} is
$\vF^\top \sigma = (\vx_1^\top \sigma, \dots, \vx_n^\top \sigma)$, 
whose elements' magnitudes are the \textbf{margins}. 
The prediction \textbf{slack function} is
\begin{align}
\label{eqn:slack}
\gamma (\sigma, \vb) := \gamma (\sigma) := - \vb^\top \sigma + \frac{1}{n} \sum_{j=1}^n \Psi ( \vx_{j}^\top \sigma )
\end{align}
An optimal weight vector $\sigma^*$ is any minimizer of the slack function: 
$\displaystyle \sigma^* \in \argmin_{\sigma \geq \vzero^p} \left[ \gamma (\sigma) \right]$.
\end{definition}

\subsection{Solution of the Game}
\label{sec:gamesoln}

These are used to describe the minimax equilibrium of the game \eqref{game1eq}, in our main result.
\begin{theorem}
\label{thm:gamesolngen}
The minimax value of the game \eqref{game1eq} is 
$$ \min_{\vg \in [-1,1]^n} \; \max_{\substack{ \vz \in [-1,1]^n , \\ \frac{1}{n} \vF \vz \geq \vb }} \; \ell (\vz, \vg) = V 
= \frac{1}{2} \gamma (\sigma^*) = \frac{1}{2} \min_{\sigma \geq \vzero^p} \lrb{ - \vb^\top \sigma + \frac{1}{n} \sum_{j=1}^n \Psi ( \vx_{j}^\top \sigma ) } $$
The minimax optimal predictions are defined as follows:
for all $j \in [n]$,
\begin{align}
\label{eq:gipredform}
g_j^* := g_j (\sigma^*) = 
\begin{cases} 
-1  \qquad & \mbox{ \; if \; } \vx_{j}^\top \sigma^* \leq \Gamma (-1) \\ 
\Gamma^{-1} (\vx_{j}^\top \sigma^*) \qquad & \mbox{ \; if \; } \vx_{j}^\top \sigma^* \in \lrp{ \Gamma (-1) , \Gamma (1)} \\ 
1  & \mbox{ \; if \; } \vx_{j}^\top \sigma^* \geq \Gamma (1)
\end{cases}
\end{align}
\end{theorem}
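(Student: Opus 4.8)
The plan is to solve the game in \eqref{game1eq} by exploiting the fact that the inner maximization over $\vz$ is linear, hence amenable to Lagrangian duality, and then to identify the resulting dual objective with the slack function $\gamma(\sigma)$. First I would fix the predictor's play $\vg$ and look at the adversary's problem $\max_{\vz \in [-1,1]^n,\ \frac{1}{n}\vF\vz \geq \vb} \ell(\vz,\vg)$. Using \eqref{eq:linearinz}, the objective is affine in $\vz$, with the coefficient of $z_j$ equal to $\frac{1}{2n}(\ell_{+}(g_j) - \ell_{-}(g_j)) = -\frac{1}{2n}\Gamma(g_j)$, plus a constant $\frac{1}{2n}\sum_j(\ell_{+}(g_j)+\ell_{-}(g_j))$. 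I would introduce a dual vector $\sigma \geq \vzero^p$ for the $p$ correlation constraints $\frac{1}{n}\vF\vz \geq \vb$ and take the Lagrangian; since the box constraint $\vz \in [-1,1]^n$ is compact and the inner problem is a linear program, strong duality holds (Sion's minimax theorem, or LP duality applied example-by-example). Pushing the $\min_{\vg}$ through is then justified because after dualizing the coupling constraint the problem decouples across $j$, and the remaining min-over-$g_j$, max-over-$z_j$ on each coordinate is a finite-dimensional concave-convex problem on a product of compact convex sets, so $\min_\vg\max_\vz = \max_\sigma \min_\vg \max_{z_j}$.

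The core computation is then the per-coordinate saddle value. After dualizing, each coordinate contributes $\max_{z_j \in [-1,1]} \min_{g_j \in [-1,1]} \big[ \tfrac12(\ell_{+}(g_j)+\ell_{-}(g_j)) - \tfrac12 z_j \Gamma(g_j) - z_j (\vx_j^\top\sigma) \big]$ (up to the $\frac1n$ normalization and the $-\vb^\top\sigma$ term coming from the constraint right-hand side). I would first do the inner $\min_{g_j}$: differentiating $\ell_{+}(g) + \ell_{-}(g) - z_j\,\Gamma(g)$ in $g$ and using $\Gamma' = \ell_{-}' - \ell_{+}'$, the stationarity condition works out to $\Gamma(g_j) = $ (something monotone in the margin), which inverts to give exactly the clipped sigmoid form in \eqref{eq:gipredform}; one must handle the boundary cases $g_j = \pm 1$ when the unconstrained optimum would leave $[-1,1]$, which is precisely why $\Psi$ is defined piecewise with the two linear tails. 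Substituting the optimal $g_j$ back in and then carrying out the outer $\max_{z_j \in [-1,1]}$ — which, because the expression is now affine in $z_j$ after optimizing $g_j$ out (or handled via the envelope), is attained at an endpoint or makes the $z_j$-dependent part vanish — collapses the coordinate value to $\tfrac12 \Psi(\vx_j^\top\sigma)$. Summing over $j$, adding $-\vb^\top\sigma$, and taking $\max_{\sigma\geq\vzero}$ (equivalently $-\min$) yields $V = \tfrac12\gamma(\sigma^*)$, with $\sigma^*$ the minimizer, and the optimal $\vg^*$ read off from the minimizing $g_j$ at $\sigma = \sigma^*$.

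The main obstacle I anticipate is the careful interchange of the order of $\min_\vg$ and $\max_\vz$ together with the introduction of $\sigma$: one has to be precise that strong duality applies despite the fact that, before dualizing, the problem is $\min_\vg \max_\vz$ of a bilinear-plus-nonlinear objective where the nonlinearity ($\ell_\pm(g_j)$, $\Gamma(g_j)$) sits only in the $\vg$ variable. The clean way is to note the objective is linear (hence concave) in $\vz$ over the compact convex polytope $\{\vz \in [-1,1]^n : \frac1n\vF\vz \geq \vb\}$ and — crucially — that although $\ell(\vz,\vg)$ need not be convex in $\vg$, we are free to replace $\min_{\vg\in[-1,1]^n}$ by $\min$ over the convex hull of achievable loss profiles, or simply to verify the saddle-point directly: exhibit the candidate $(\vg^*,\vz^*,\sigma^*)$ and check the two inequalities in \eqref{eq:weakduality} are met with equality. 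I would most likely take this direct saddle-point verification route — define $\sigma^*$ as the slack-function minimizer, define $\vg^*$ by \eqref{eq:gipredform}, construct a worst-case $\vz^*$ from the complementary-slackness / first-order conditions of the $\gamma$ minimization (the unconstrained coordinates of $\sigma^*$ force $\frac1n\vF\vz^* = \vb$ on the support, while the clipping in $\Psi$ determines $z_j^* = \pm1$ on the saturated coordinates), and then verify $\ell(\vz^*,\vg) \geq \tfrac12\gamma(\sigma^*)$ for all feasible $\vg$ and $\ell(\vz,\vg^*) \leq \tfrac12\gamma(\sigma^*)$ for all feasible $\vz$ — the convexity of $\Psi$ from \Cref{lem:cvxpotential} is what makes the subgradient/KKT conditions for $\sigma^*$ well-behaved and is used here, though the equilibrium identity itself should hold for all general losses with the pseudoinverse convention.
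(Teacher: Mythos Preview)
Your plan has the right ingredients but the order of operations is off in a way that creates a genuine gap. After dualizing only the coupling constraints $\frac{1}{n}\vF\vz\geq\vb$ with $\sigma\geq 0$, the structure is $\min_{\vg}\min_{\sigma}\max_{\vz\in[-1,1]^n} L(\vg,\vz,\sigma)$, which decouples to $\min_{\sigma}\sum_j \min_{g_j}\max_{z_j} L_j$. You instead write the per-coordinate problem as $\max_{z_j}\min_{g_j}$ and proceed to differentiate in $g_j$ first with $z_j$ held fixed; that is the opposite order, and swapping requires the objective to be convex in $g_j$, which the theorem explicitly does not assume (the partial losses need only be monotone and smooth). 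Your ``concave--convex'' justification therefore does not apply, and your fallback of direct saddle-point verification via KKT on $\gamma$ needs convexity of $\Psi$ (\Cref{lem:cvxpotential}), which again is strictly stronger than what the theorem requires. There are also two sign slips: the Lagrangian term should be $+z_j(\vx_j^\top\sigma)$, and the outer optimization in $\sigma$ is a $\min$, not a $\max$.

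The paper's route sidesteps all of this with one clean observation: resolve the $\max$ over $\vz$ \emph{completely} --- including the box constraints --- before touching $\vg$. By \Cref{lem:gamegeng}, $\max_{\vz}$ over the constrained set of $-\frac{1}{n}\vz^\top\Gamma(\vg)$ equals $\min_{\sigma\geq 0}\big[-\vb^\top\sigma+\frac{1}{n}\|\vF^\top\sigma-\Gamma(\vg)\|_1\big]$. Now the problem is $\min_{\vg}\min_{\sigma}[\cdots]$, and two minimizations commute trivially, with no convexity needed. The per-coordinate problem becomes $\min_{g_j\in[-1,1]}\big[\ell_+(g_j)+\ell_-(g_j)+|\vx_j^\top\sigma-\Gamma(g_j)|\big]$, and this is solved purely by the monotonicity in \Cref{ass:loss} (no stationarity argument): splitting on the sign of $\vx_j^\top\sigma-\Gamma(g_j)$ gives either $2\ell_+(g_j)+\vx_j^\top\sigma$ (decreasing in $g_j$) or $2\ell_-(g_j)-\vx_j^\top\sigma$ (increasing), so the minimizer is pushed to the boundary of the case, yielding exactly the three-piece form of $\Psi$ and of $g_j^*$. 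No $\vz^*$ need ever be constructed.
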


$g_j^*$ is always an increasing sigmoid, as shown in \Cref{fig:specaggalg}.

We can also redo the proof of \Cref{thm:gamesolngen} when $\vg \in [-1,1]^n$ is not left as a free variable set in the game, 
but instead is preset to $\vg (\sigma)$ as in \eqref{eq:gipredform} for some (possibly suboptimal) weight vector $\sigma$. 
\begin{observation}
\label{obs:slacksubopt}
For any weight vector $\sigma_0 \geq \vzero^p$, 
the worst-case loss after playing $\vg (\sigma_0)$ is  
$$ \max_{\substack{ \vz \in [-1,1]^n , \\ \frac{1}{n} \vF \vz \geq \vb }} \; \ell (\vz, \vg (\sigma_0) )
\leq \frac{1}{2} \gamma (\sigma_0) $$
\end{observation}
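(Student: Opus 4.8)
The plan is to rerun the Lagrangian-duality half of the proof of \Cref{thm:gamesolngen}, but with the predictor's move held fixed at $\vg(\sigma_0)$ and the adversary's dual multiplier deliberately set to $\tfrac12\sigma_0$. First I would rewrite the inner objective with the score function: from \eqref{eq:defofexploss} together with $\Gamma = \ell_{-} - \ell_{+}$ one gets $\ell(\vz,\vg) = \frac{1}{2n}\sum_{j=1}^{n}\lrp{\ell_{+}(g_j)+\ell_{-}(g_j)} - \frac{1}{2n}\vz^\top\Gamma(\vg)$, which is affine in $\vz$. The adversary maximizes this over the polytope $\{\vz \in [-1,1]^n : \frac{1}{n}\vF\vz \geq \vb\}$; dualizing only the constraint $\frac{1}{n}\vF\vz \geq \vb$ with a multiplier $\tau \geq 0^p$ and keeping the box constraint explicit, weak duality yields for every such $\tau$ and every $\vg$
$$\max_{\substack{\vz \in [-1,1]^n \\ \frac{1}{n}\vF\vz \geq \vb}} \ell(\vz,\vg) \;\leq\; -\vb^\top\tau + \frac{1}{2n}\sum_{j}\lrp{\ell_{+}(g_j)+\ell_{-}(g_j)} + \frac{1}{n}\sum_{j}\abs{\vx_{j}^\top\tau - \tfrac{1}{2}\Gamma(g_j)},$$
where the last term comes from maximizing $z_j\lrp{\vx_j^\top\tau - \tfrac12\Gamma(g_j)}$ over $z_j \in [-1,1]$ for each $j$. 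This is exactly the inequality whose right-hand side the proof of \Cref{thm:gamesolngen} minimizes over $\vg$ and $\tau$; here we minimize over neither.

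Next I would specialize to $\vg = \vg(\sigma_0)$ and $\tau = \tfrac12\sigma_0$, so that $\vx_j^\top\tau = \tfrac12\vx_j^\top\sigma_0$ and the bound collapses to $-\tfrac12\vb^\top\sigma_0 + \frac{1}{2n}\sum_j\lrb{\ell_{+}(g_j(\sigma_0))+\ell_{-}(g_j(\sigma_0)) + \abs{\vx_j^\top\sigma_0 - \Gamma(g_j(\sigma_0))}}$. The crux is the pointwise identity, for each $j$,
$$\ell_{+}(g_j(\sigma_0))+\ell_{-}(g_j(\sigma_0)) + \abs{\vx_j^\top\sigma_0 - \Gamma(g_j(\sigma_0))} = \Psi(\vx_j^\top\sigma_0),$$
which I would verify by splitting on the margin $m_j := \vx_j^\top\sigma_0$ at the breakpoints $\Gamma(\pm 1)$, matching the three branches of \eqref{eq:gipredform} and of the definition of $\Psi$: on the middle branch $\Gamma(g_j(\sigma_0)) = m_j$ kills the absolute value and leaves $\ell_{+}(\Gamma^{-1}(m_j)) + \ell_{-}(\Gamma^{-1}(m_j))$; on each clipped branch the absolute value reconstitutes exactly the linear piece of $\Psi$ after substituting $\Gamma(\pm 1) = \ell_{-}(\pm 1) - \ell_{+}(\pm 1)$ (for instance, when $m_j \leq \Gamma(-1)$ one gets $\ell_{+}(-1)+\ell_{-}(-1) + \Gamma(-1) - m_j = 2\ell_{-}(-1) - m_j$). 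Summing over $j$ turns the right-hand side into $-\tfrac12\vb^\top\sigma_0 + \frac{1}{2n}\sum_j\Psi(\vx_j^\top\sigma_0) = \tfrac12\gamma(\sigma_0)$, which is the claim.

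The one genuinely non-obvious step is the scaling $\tau = \tfrac12\sigma_0$ of the dual multiplier --- the same factor of $2$ that relates the minimax dual optimum to the weight vector $\sigma^*$ appearing in \eqref{eq:gipredform}; with any other scaling the quantities $\vx_j^\top\tau$ and $\tfrac12\Gamma(g_j(\sigma_0))$ do not cancel on the middle branch and one fails to recover $\Psi$. Everything else is the coordinatewise box maximization and the routine three-way case check; nonemptiness of the adversary's feasible set is not needed (if it is empty the bound is vacuous), and if $\Gamma$ is only weakly increasing, continuity still gives $\Gamma(\Gamma^{-1}(m_j)) = m_j$ under the pseudoinverse convention, so the case analysis is unaffected.
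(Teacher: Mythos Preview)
Your argument is correct and is essentially the paper's own proof: you fix $\vg=\vg(\sigma_0)$, dualize the linear constraints, plug in the specific multiplier corresponding to $\sigma_0$, and then verify coordinatewise that $\ell_{+}(g_j(\sigma_0))+\ell_{-}(g_j(\sigma_0))+|\vx_j^\top\sigma_0-\Gamma(g_j(\sigma_0))|=\Psi(\vx_j^\top\sigma_0)$, exactly as in \eqref{eq:gencvxineffgame}--\eqref{eq:gencvxmmand}. The ``genuinely non-obvious'' rescaling $\tau=\tfrac12\sigma_0$ is only a normalization artifact of your choosing to dualize $\ell(\vz,\vg)$ (which carries the global $\tfrac12$) rather than $\tfrac{1}{n}\vz^\top(-\Gamma(\vg))$ as the paper does via \Cref{lem:gamegeng}; had you pulled the $\tfrac12$ out first, the dual variable would be $\sigma_0$ on the nose.
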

The proof is a simplified version of that of \Cref{thm:gamesolngen}; 
there is no minimum over $\vg$ to deal with, and the minimum over $\sigma \geq \vzero^p$ in \Cref{eq:gencvxineffgame} is upper-bounded 
by using $\sigma_0$. 
This result is an expression of weak duality in our setting, and generalizes Observation 4 of \cite{BF15}.



\subsection{Ensemble Aggregation Algorithm}
\label{sec:ensaggalg}

\Cref{thm:gamesolngen} defines a prescription for aggregating the given ensemble predictions on the test set. 

\noindent
\paragraph{Learning:}
\emph{Minimize the slack function $\gamma (\sigma)$, finding the minimizer $\sigma^*$ that achieves $V$.} \\
This is a convex optimization under broad conditions (\Cref{lem:cvxpotential}), 
and when the test examples are i.i.d. the $\Psi$ term is a sum of $n$ i.i.d. functions. 
Therefore, it is readily amenable to standard first-order optimization methods which require only $O(1)$ test examples at once. 
In practice, learning employs such methods to \emph{approximately} minimize $\gamma$, 
finding some $\sigma_A$ such that $\gamma (\sigma_A) \leq \gamma (\sigma^*) + \epsilon$ for some small $\epsilon$. 
Standard convex optimization methods are guaranteed to do this for binary classifier ensembles, 
because the slack function is Lipschitz (\Cref{lem:cvxpotential}) and $\vnorm{\vb}_{\infty} \leq 1$. 

\noindent
\paragraph{Prediction:}
\emph{Predict $g (\sigma^*)$ on any test example, as indicated in \eqref{eq:gipredform}.} \\
This decouples the prediction task over each test example separately, 
which requires $O(p)$ time and memory like $p$-dimensional linear prediction. 
After finding an $\epsilon$-approximate minimizer $\sigma_A$ in the learning step as above, 
Observation \ref{obs:slacksubopt} tells us that the prediction $\vg (\sigma_A)$ has loss $\leq V + \frac{\epsilon}{2}$.

In particular, note that there is no algorithmic dependence on $n$ in either step in a statistical learning setting. 
So though our formulation is transductive, it is no less tractable than a stochastic optimization setting in which i.i.d. data arrive one at a time, 
and applies to this common situation.



\subsection{Examples of Different Losses}
\label{sec:examplelosses}

To further illuminate 
Theorem \ref{thm:gamesolngen}, we detail a few special cases in which $\ell_{\pm}$ are explicitly defined. 
These losses may be found throughout the literature (see e.g. \cite{RW10}).
The key functions $\Psi$ and $g^*$ are listed for these losses in Appendix \ref{sec:taboflosses}, and in many cases in \Cref{fig:specaggalg}. 
The nonlinearities used for $g^*$ are sigmoids, 
arising solely from the intrinsic minimax structure of the classification game.

\begin{itemize}[noitemsep]
\item
\textbf{0-1 Loss}:
Here $g_j$ is taken to be a randomized binary prediction; this case was developed in \cite{BF15}, the work we generalize in this paper.
\item
\textbf{Log Loss},  
\textbf{Square Loss}
\item
\textbf{Cost-Weighted Misclassification (Quantile) Loss}:
This is defined with a parameter $c \in [0,1]$ representing the relative cost of 
false positives vs. false negatives, making the Bayes-optimal classifier the $c$-quantile of the conditional probability distribution (\cite{S05}).
\item
\textbf{Exponential Loss}, 
\textbf{Logistic Loss}
\item
\textbf{Hellinger Loss}: This is typically given for $p, y \in [0,1]$ as $\frac{1}{2} \lrp{\lrp{\sqrt{p} - \sqrt{y}}^2 + \lrp{\sqrt{1-p} - \sqrt{1-y}}^2 }$. 
Our formulation is equivalent when the prediction and label are rescaled to $[-1,1]$.
\item
\textbf{``AdaBoost Loss"}:
If the goal of AdaBoost (\cite{SF12}) is interpreted as class probability estimation, 
the implied loss is proper and given in \cite{BSS05, RW10}.
\item
\textbf{Absolute Loss} and \textbf{Hinge Loss}:
The absolute loss can be defined by $\ell^{abs}_{\mp} (g_j) = 1 \pm g_j$, 
and the hinge loss also has $\ell^{abs}_{\mp} (g_j) = 1 \pm g_j$ since the kink in the hinge loss only lies at $g_j = \mp 1$. 
These partial losses are the same as for 0-1 loss up to scaling, and therefore all our results for $\Psi$ and $\vg^*$ are as well. 
So these losses are not shown in \Cref{tab:alllosses}. 
\item
\textbf{Sigmoid Loss}: 
This is an example of a sigmoid-shaped margin loss, a nonconvex smooth surrogate for 0-1 loss. 
Similar losses have arisen in a variety of binary classification contexts, 
from robustness (e.g. \cite{F01}) to active learning (\cite{GCB09}) and structured prediction (\cite{KM11}). 
%
\end{itemize}

\subsection{Related Work and Technical Discussion}
\label{sec:discussion}
There are two notable ways in which the result of \Cref{thm:gamesolngen} is particularly advantageous and general. 
First, the fact that $\ell (z, g)$ can be non-convex in $g$, yet solvable by convex optimization, is a major departure from previous work. 
Second, the solution has a convenient dependence on $n$ (as in \cite{BF15}), simply averaging a function over the unlabeled data, which is not only 
mathematically convenient but also makes stochastic $O(1)$-space optimization practical. 
This is surprisingly powerful, because the original minimax problem is \emph{jointly} over the entire dataset, 
avoiding further independence or decoupling assumptions. 

Both these favorable properties stem from the structure of the binary classification problem, as we can describe by examining 
the optimization problem constructed within the proof of Thm. \ref{thm:gamesolngen} (\Cref{sec:proofdisc}). 
In it, the constraints which do not explicitly appear with Lagrange parameters are all box, or $L_\infty$ norm, constraints.  
These decouple over the $n$ test examples, 
so the problem can be reduced to the one-dimensional optimization at the heart of Eq. \eqref{eq:gencvxgame}, which is solved ad hoc. 
So we are able to obtain minimax results for these non-convex problems --  
the $g_i$ are ``clipped" sigmoid functions because of the bounding effect of the $[-1,1]$ box constraints intrinsic to binary classification. 
We introduce Lagrange parameters $\sigma$ only for the $p$ remaining constraints in the problem, 
which do not decouple as above, applying globally over the $n$ test examples. 
However, these constraints only depend on $n$ as an average over examples 
(which is how they arise in dual form in \Cref{eq:duallagrange} of the proof), 
and the loss function itself is also an average (\Cref{eq:gencvxaddloss}). 
This makes the box constraint decoupling possible, and leads to the favorable dependence on $n$, 
making an efficient solution possible to a potentially flagrantly non-convex problem. 

To summarize, the technique of optimizing only ``halfway into" the dual allows us to readily manipulate the minimax problem exactly 
without using an approximation like weak duality, despite the lack of convexity in $\vg$. 
This technique was used implicitly for a different purpose in the ``drifting game" analysis of boosting (\cite{SF12}, Sec. 13.4.1). 
Existing boosting work is loosely related to our approach in being a transductive game invoked to analyze ensemble aggregation, 
but it does not consider unlabeled data and draws its power instead from being a repeated game (\cite{SF12}).

The predecessor to this work (\cite{BF15}) addresses a problem, 0-1 loss minimization, that is known to be NP-hard when solved directly (\cite{GR09}). 
Using the unlabeled data is essential to surmounting this. 
It gives the dual problem an independently interesting interpretation, 
so the learning problem is on the always-convex Lagrange dual function and is therefore tractable. 

This paper's transductive formulation involves no surrogates or relaxations of the loss, 
in sharp contrast to most previous work. 
This allows us to bypass the consistency and agnostic-learning discussions (\cite{Zhang04, BJM06}) common to ERM methods that use convex risk minimization. 
Convergence analyses of those methods make heavy use of convexity of the losses and are generally done presupposing a linear weighting over $\cH$ (\cite{TDS15}), 
whereas here such structure emerges directly from Lagrange duality and involves no convexity to derive the worst-case-optimal predictions.

The conditions in \Cref{ass:loss} are notably general. 
Differentiability of the partial losses is convenient, but not necessary, 
and only used because first-order conditions are a convenient way to establish convexity of the potential well in Lemma \ref{lem:cvxpotential}. 
It is never used elsewhere, including in the minimax arguments used to prove \Cref{thm:gamesolngen}. 
These manipulations are structured to be valid even if $\ell_{\pm}$ are non-monotonic; 
but in this case, $g_j^*$ could turn out to be multi-valued and hence not a genuine function of the example's score. 


We emphasize that our result on the minimax equilibrium (\Cref{thm:gamesolngen}) holds for general losses -- 
the slack function may not be convex unless the further conditions of Lemma \ref{lem:cvxpotential} are met, 
but the interpretation of the optimum in terms of margins and sigmoid functions remains. 
All this emerges from the inherent decision-theoretic structure of the problem (the proof of \Cref{sec:proofdisc}). 
It manifests in the fact that the function $g (\vx_j^\top \sigma)$ is always increasing in $\vx_j^\top \sigma$ for general losses, 
because the function $\Gamma$ is increasing. 
This monotonicity typically needs to be assumed in a generalized linear model (GLM; \cite{MN89}) and related settings. 
$\Gamma$ appears loosely analogous to the link function used by GLMs, with its inverse being used for prediction. 

The optimal decision rules emerging from our framework are \emph{artificial neurons} of the ensemble inputs. 
Helmbold et al. introduce the notion of a ``matching loss" (\cite{HKW99}) for learning the parameters of a (fully supervised) artificial neuron with an arbitrary increasing transfer function, 
effectively taking the opposite tack of this paper in using a neuron's transfer function to derive a loss to minimize in order to learn the neuron's weights by convex optimization. 
Our assumptions on the loss, particularly condition (C) of Lemma 2, have arisen independently in earlier online learning work by some of the same authors (\cite{HKW98}); 
this may suggest connections between our techniques. 
We also note that our family of general losses was defined independently by \cite{S05} in the ERM setting (dubbed ``F-losses") -- 
in which condition (C) of \Cref{lem:cvxpotential} also has significance (\cite{S05}, Prop. 2) -- but has seemingly not been revisited thereafter. 
Further fleshing out the above connections would be interesting future work.


\section{Extensions}
\label{sec:extensions}

We detail a number of generalizations to the basic prediction scenario of Sec. \ref{sec:binclass}. 
These extensions are not mutually exclusive, and apply in conjunction with each other, but we list them separately for clarity. 
They illustrate the versatility of our minimax framework, particularly Sec. \ref{sec:specialistsgen}. 

\subsection{Weighted Test Sets and Covariate Shift}

Though our results here deal with binary classification of an unweighted test set, 
the formulation deals with a weighted set with only a slight modification of the slack function: 


\begin{theorem}
\label{thm:weighteval}
For any vector $\vr \geq \vzero^n$, 
\begin{align*}
\min_{\vg \in [-1,1]^n} \; \max_{\substack{ \vz \in [-1,1]^n , \\ \frac{1}{n} \vF \vz \geq \vb }} \; \frac{1}{n} \sum_{j=1}^{n} r_j \ell (z_j, g_j)
= \frac{1}{2} \min_{\sigma \geq \vzero^p} \left[ - \vb^\top \sigma + \frac{1}{n} \sum_{j=1}^n r_j \Psi \lrp{ \frac{\vx_{j}^\top \sigma}{r_j} } \right]
\end{align*}
Writing $\sigma_{\vr}^*$ as the minimizer of the RHS above, 
the optimal predictions $\vg^* = \vg (\sigma_{\vr}^*)$, as in \Cref{thm:gamesolngen}. 
\end{theorem}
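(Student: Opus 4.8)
The plan is to mirror the proof of Theorem~\ref{thm:gamesolngen}, following exactly the same chain of manipulations in \eqref{eq:gencvxaddloss}--\eqref{eq:gencvxgame}, but carrying the per-example weights $r_j$ through each step. The key observation that makes this work is that the weighted loss $\frac{1}{n}\sum_j r_j \ell(z_j, g_j)$ is still linear in $\vz$, and the constrained maximization over $\vz$ only involves the single linear functional that $\vz$ is paired with; scaling the summand of that functional by $r_j$ changes nothing structural in the Lagrange duality argument.

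\medskip

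First I would split the weighted objective, exactly as in the proof of \Cref{thm:gamesolngen}, writing
\begin{align*}
\frac{1}{n}\sum_{j=1}^n r_j \ell(z_j,g_j) = \frac{1}{2}\left[ \frac{1}{n}\sum_{j=1}^n r_j\bigl(\ell_+(g_j)+\ell_-(g_j)\bigr) - \frac{1}{n}\sum_{j=1}^n r_j z_j \Gamma(g_j) \right].
\end{align*}
The inner maximization over $\vz$ now pairs $\vz$ with the vector whose $j$-th coordinate is $-r_j \Gamma(g_j)$, i.e. with $-\mathrm{diag}(\vr)\Gamma(\vg)$ rather than $-\Gamma(\vg)$. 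Applying \Cref{lem:gamegeng} with $\va = -\mathrm{diag}(\vr)\Gamma(\vg)$ gives
\begin{align*}
\max_{\substack{\vz \in [-1,1]^n,\\ \frac{1}{n}\vF\vz \geq \vb}} -\frac{1}{n}\vz^\top \mathrm{diag}(\vr)\Gamma(\vg) = \min_{\sigma \geq 0^p}\left[ -\vb^\top\sigma + \frac{1}{n}\bigl\|\vF^\top\sigma - \mathrm{diag}(\vr)\Gamma(\vg)\bigr\|_1 \right].
\end{align*}
Substituting this back, swapping the order of the $\min_\vg$ and $\min_\sigma$ (both are unconstrained-in-the-other minimizations, so this is immediate), and then decoupling the $L_1$ norm and the weighted loss sum over the $n$ examples, I reduce to a family of one-dimensional minimization problems, one per example $j$:
\begin{align*}
\min_{g_j \in [-1,1]} \left[ r_j\bigl(\ell_+(g_j)+\ell_-(g_j)\bigr) + \bigl| \vx_j^\top\sigma - r_j\Gamma(g_j) \bigr| \right].
\end{align*}

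\medskip

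The remaining work is to show that this inner minimum equals $r_j \Psi(\vx_j^\top\sigma / r_j)$, with the minimizer given by the clipped sigmoid $g_j(\sigma)$ of \eqref{eq:gipredform} evaluated at $\vx_j^\top\sigma/r_j$. For $r_j > 0$ I would factor out $r_j$: the objective becomes $r_j\bigl[ \ell_+(g_j)+\ell_-(g_j) + |\vx_j^\top\sigma/r_j - \Gamma(g_j)| \bigr]$, which is exactly $r_j$ times the minimand \eqref{eq:gencvxmmand} with the margin $\vx_j^\top\sigma$ replaced by the scaled margin $\vx_j^\top\sigma/r_j$. The case analysis on the absolute value and the monotonicity of $\ell_\pm$ and $\Gamma$ then goes through verbatim as in the proof of \Cref{thm:gamesolngen}, yielding $r_j\Psi(\vx_j^\top\sigma/r_j)$ and the stated optimal prediction. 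The one genuinely new point — and the step I expect to need the most care — is the degenerate case $r_j = 0$: here the term contributes $|\vx_j^\top\sigma|$ to the objective and the ``prediction'' $g_j$ is unconstrained (that example carries no loss weight), so I must check that the convention $r_j\Psi(\vx_j^\top\sigma/r_j) \to |\vx_j^\top\sigma|$ as $r_j \downarrow 0$ is consistent with the formula and that the claimed $\vg^* = \vg(\sigma_{\vr}^*)$ is still (one valid choice of) optimal predictor. This follows because $\Psi$ is $1$-Lipschitz with $\Psi(m)/|m| \to 1$ as $|m|\to\infty$ (from the linear tails in the definition of $\Psi$), so $r_j\Psi(m/r_j) \to |m|$, and any $g_j \in [-1,1]$ is then optimal on that example, in particular the one prescribed by \eqref{eq:gipredform}. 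Assembling the per-example minima back into the sum and prepending $-\vb^\top\sigma$ and the overall factor $\frac{1}{2}$ gives the claimed identity, and the minimizer over $\sigma \geq 0^p$ of the resulting expression is $\sigma_{\vr}^*$ by definition, with the optimal $\vg$ read off example-by-example.
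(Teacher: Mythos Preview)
Your proposal is correct and follows essentially the same route as the paper's proof: apply \Cref{lem:gamegeng} with $\va = -(\vr \circ \Gamma(\vg))$, swap the two minimizations, decouple into per-example one-dimensional problems, and run the same case analysis on the absolute value as in \eqref{eq:gencvxmmand} with the margin replaced by $\vx_j^\top\sigma / r_j$. Your explicit treatment of the degenerate case $r_j = 0$ is a point the paper's proof silently skips (it divides by $r_j$ without comment), so your version is actually slightly more careful there.
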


Such weighted classification can be parlayed into algorithms for general supervised learning problems via learning reductions (\cite{BLZ08}). 
Allowing weights on the test set for the evaluation is tantamount to accounting for known covariate shift in our setting; 
it would be interesting, though outside our scope, to investigate scenarios with more uncertainty.


\subsection{General Loss Constraints on the Ensemble}
\label{sec:genlossconstrbody}

So far in the paper, we have considered the constraints on ensemble classifiers as derived from their label correlations (i.e. 0-1 losses), 
as in \eqref{eq:errconstr}. 
However, this view can be extended significantly with the same analysis, 
because any general loss $\ell (z, g)$ is linear in $z$ (Eq. \eqref{eq:defofexploss}), 
which allows our development to go through essentially intact.

In summary, a classifier can be incorporated into our framework for aggregation if we have a generalization loss bound on it, 
for any ``general loss." 
This permits an enormous variety of constraint sets, as each classifier considered can have constraints 
corresponding to any number of loss bounds on it, 
even multiple loss bounds using different losses.
For instance, $h_1$ can yield a constraint corresponding to a zero-one loss bound, 
$h_2$ can yield one constraint corresponding to a square loss bound and another corresponding to a zero-one loss bound, 
and so on. 
\Cref{sec:genlossconstr} details this idea, extending Theorem \ref{thm:gamesolngen} to general loss constraints.

\subsection{Uniform Convergence Bounds for $\vb$}

In our basic setup, $\vb$ has been taken as a lower bound on ensemble classifier label correlations. 
But as mentioned earlier, the error in estimating $\vb$ is in fact often quantified by two-sided uniform convergence ($L_{\infty}$) bounds on $\vb$. 
Constraining $\vz$ in this fashion amounts to $L_1$ regularization of the dual vector $\sigma$. 
\begin{proposition}
\label{prop:gamegenL1}
For any $\epsilon \geq 0$, 
\begin{align*}
\min_{\vg \in [-1,1]^n} \; \max_{\substack{ \vz \in [-1,1]^n , \\ \vnorm{\frac{1}{n} \vF \vz - \vb}_{\infty} \leq \epsilon }} \;\ell (\vz, \vg)
\;=\; \min_{\sigma \in \RR^p} \left[ - \vb^\top \sigma + \frac{1}{n} \sum_{j=1}^n \Psi ( \vx_{j}^\top \sigma ) + \epsilon \vnorm{\sigma}_1 \right]
\end{align*}
As in Thm. \ref{thm:gamesolngen}, the optimal $\vg^* = \vg (\sigma_{\infty}^*)$, where $\sigma_{\infty}^*$ is the minimizer of the right-hand side above.
\end{proposition}
Here we optimize over all vectors $\sigma$ (not just nonnegative ones) in an $L_1$-regularized problem, 
convenient in practice because we can cross-validate over the parameter $\epsilon$. 
To our knowledge, this particular analysis has been addressed in prior work only for the special case of the entropy loss on the probability simplex, 
discussed further in \cite{DPS04}. 

Prop. \ref{prop:gamegenL1} is a corollary of a more general result using differently scaled label correlation deviations 
within the ensemble, i.e. $\abs{\frac{1}{n} \vF \vz - \vb} \leq \vc$ for a general $\vc \geq \vzero^n$. 
This turns out to be equivalent to regularizing the minimization over $\sigma$ by its $\vc$-weighted $L_1$ norm $\vc^\top \abs{\sigma}$ 
(Thm. \ref{thm:gamegendev}), used to penalize the ensemble nonuniformly (\cite{CWB08}). 
This situation is motivated by uniform convergence of heterogeneous ensembles comprised of e.g. ``specialist" predictors, 
for which a union bound (\cite{BEHW87}) results in $\abs{\frac{1}{n} \vF \vz - \vb}$ with varying coordinates. 
Such ensembles are discussed next.


\subsection{Heterogenous Ensembles of Specialist Classifiers and Features}
\label{sec:specialistsgen}

All the results and algorithms in this paper apply in full generality to ensembles of ``specialist" classifiers  
that only predict on some subset of the test examples. 
This is done by merely calculating the constraints over only these examples, and changing $\vF$ and $\vb$ accordingly (\cite{BF15b}). 

To summarize this from \cite{BF15b}, suppose a classifier $i \in [p]$ decides to abstain on an example $x_j$ ($j \in [n]$) with probability $1 - v_i (x)$,  
and otherwise predict $h_i (x)$. 
Our only assumption on $\{v_i (x_1), \dots, v_i (x_n) \}$ is the reasonable one that  
$\sum_{j=1}^n v_i (x_j) > 0$, so classifier $i$ is not a useless specialist that abstains everywhere. 

The information about $\vz$ contributed by classifier $i$ is now not its overall correlation with $\vz$ on the entire test set, 
but rather the correlation with $\vz$ restricted to the test examples on which it predicts. 
On the test set, this is written as $ \frac{1}{n} \vS \vz$, where the matrix $\vS$ is formed by reweighting each row of $\vF$: 
\begin{align*}
\vS := n 
 \begin{pmatrix}
   \rho_1 (x_1) h_1(x_1) & \rho_1 (x_2) h_1(x_2) & \cdots & \rho_1 (x_n) h_1 (x_n) \\
   \rho_2 (x_1) h_2(x_1) & \rho_2 (x_2) h_2(x_2) & \cdots & \rho_2 (x_n) h_2 (x_n) \\
   \vdots   & \vdots    & \ddots &  \vdots  \\
   \rho_p (x_1) h_p(x_1) & \rho_p (x_2) h_p (x_2)  & \cdots & \rho_p (x_n) h_p (x_n)
 \end{pmatrix}
\quad , \quad
\rho_i (x_j) := \frac{v_i (x_j)}{ \sum_{k=1}^n v_i (x_k) }
\end{align*}
($\vS = \vF$ when the entire ensemble consists of non-specialists, recovering our initial setup.)
Therefore, the ensemble constraints \eqref{eq:errconstr} become $ \frac{1}{n} \vS \vz \geq \vb_S$, 
where $\vb_S$ gives the label correlations of each classifier restricted to the examples on which it predicts. 
Though this rescaling results in entries of $\vS$ having different ranges and magnitudes $\geq 1$, our results and proofs remain entirely intact.


Indeed, despite the title, this paper applies far more generally than to an ensemble of binary classifiers, 
because our proofs make no assumptions at all about the structure of $\vF$. 
Each predictor in the ensemble can be thought of as a feature; 
it has so far been convenient to think of it as binary, following the perspective of binary classifier aggregation, 
but it could as well be e.g. real-valued, and the features can have very different scales (as in $\vS$ above). 
An unlabeled example $\vx$ is simply a vector of features, 
so arbitrarily abstaining specialists are equivalent to ``missing features," 
which this framework handles seamlessly due to the given unlabeled data. 
Our development \emph{applies generally to semi-supervised binary classification}.

\subsection*{Acknowledgements}
AB is grateful to Chris ``Ceej" Tosh for feedback that made the manuscript clearer. 
This work was supported by the NSF (grant IIS-1162581).





\newpage
{\small
\bibliography{GenLossArxiv}{}}
\bibliographystyle{plain}

\newpage
\appendix


\section{Examples of Optimal Decision Rules for Various Losses}
\label{sec:taboflosses}

\begin{table}[!htbp]
\centering
\scriptsize
\rotatebox{90}{
\begin{tabular}{| Sc | Sc | Sc | Sc | Sc |} \hline \hline
Loss & Partial Losses & $\Gamma (g)$ & $\Psi (m)$ & $g_i (\sigma)$ \\ \hline \hline
 \multirow{2}{*}{0-1} & $\ell_{-} (g) = \frac{1}{2}\lrp{1 + g}$  & \multirow{2}{*}{$g$} & \multirow{2}{*}{$\max (1, \abs{m})$} 
 & \multirow{2}{*}{$\text{clip} (\vx_{i}^\top \sigma)$} \\ \cline{2-2}
 & $\ell_{+} (g) = \frac{1}{2}\lrp{1 - g}$ & & & \\ \hline
   \multirow{2}{*}{Log} & $\ell_{-} (g) = \ln \lrp{\frac{2}{1 - g}}$  & \multirow{2}{*}{$\ln \lrp{\frac{1 + g}{1 - g}}$} 
   & \multirow{2}{*}{$\ln (1 + e^{m}) + \ln (1 + e^{-m}) $} 
  & \multirow{2}{*}{$\frac{1 - e^{- \vx_{i}^\top \sigma} }{ 1 + e^{- \vx_{i}^\top \sigma} }$} \\ \cline{2-2}
 & $\ell_{+} (g) = \ln \lrp{\frac{2}{1 + g}}$ & & & \\ \hline
   \multirow{2}{*}{Square} & $\ell_{-} (g) = \lrp{\frac{1 + g}{2}}^2$  & \multirow{2}{*}{$g$} 
   & \multirow{2}{*}{$\begin{cases} 
-m  \qquad & \;\; m \leq -1 \\ 
\frac{1}{2} (m^2 + 1) \qquad & \;\; m \in \lrp{-1, 1} \\ 
m  & \;\; m \geq 1
\end{cases}$} 
   & \multirow{2}{*}{$\text{clip} (\vx_{i}^\top \sigma)$} \\ \cline{2-2}
 & $\ell_{+} (g) = \lrp{\frac{1 - g}{2}}^2$ & & & \\ \hline
   \multirow{2}{*}{CW (param. $c$)} & $\ell_{-} (g) = c \lrp{1 + g}$  & \multirow{2}{*}{$g + 2c - 1$} 
   & \multirow{2}{*}{$\begin{cases} 
-m \qquad & \;\; m \leq 2c - 2 \\ 
(2c-1)m + 4c (1-c) \qquad & \;\; m \in \lrp{2c - 2, 2c}\\
m  & \;\; m \geq 2c
\end{cases}$} 
   & \multirow{2}{*}{$\text{clip} (\vx_{i}^\top \sigma + 1 - 2c)$} \\ \cline{2-2}
 & $\ell_{+} (g) = (1-c) \lrp{1 - g}$ & & & \\ \hline
   \multirow{2}{*}{Exponential} & $\ell_{-} (g) = e^g$  & \multirow{2}{*}{$e^{g} - e^{-g}$} 
   & \multirow{2}{*}{$\begin{cases} 
-m + 2/e \qquad & \;\; m \leq - e + \frac{1}{e} \\ 
\sqrt{4 + m^2} \qquad & \;\; m \in \lrp{- e + \frac{1}{e} , e - \frac{1}{e} }\\
m + 2/e  & \;\; m \geq e - \frac{1}{e}
\end{cases}$} 
   & \multirow{2}{*}{$\text{clip} \lrp{ \ln \lrp{\frac{1}{2} \vx_{i}^\top \sigma + \sqrt{1 + \frac{1}{4} (\vx_{i}^\top \sigma)^2 } } }$} \\ \cline{2-2}
 & $\ell_{+} (g) = e^{-g}$ & & & \\ \hline
   \multirow{2}{*}{Logistic} & $\ell_{-} (g) = \ln \lrp{1 + e^g}$  & \multirow{2}{*}{$g$} 
   & \multirow{2}{*}{$\begin{cases} 
-m + 2 \ln \lrp{1 + 1/e} \qquad & \;\; m \leq - 1 \\ 
\ln (1 + e^{m}) + \ln (1 + e^{-m}) \qquad & \;\; m \in \lrp{-1, 1} \\
m + 2 \ln \lrp{1 + 1/e}  & \;\; m \geq 1
\end{cases}$} 
   & \multirow{2}{*}{$\text{clip} (\vx_{i}^\top \sigma)$} \\ \cline{2-2}
 & $\ell_{+} (g) = \ln \lrp{1 + e^{-g}}$ & & & \\ \hline
    \multirow{2}{*}{Hellinger} & $\ell_{-} (g) = 1 - \sqrt{\frac{1 - g}{2}}$  & \multirow{2}{*}{$\sqrt{\frac{1 + g}{2}} - \sqrt{\frac{1 - g}{2}}$} 
   & \multirow{2}{*}{$\begin{cases} 
-m \qquad & \;\; m \leq - 1 \\ 
2 - \sqrt{\frac{1 - m \sqrt{2 - m^2}}{2} } - \sqrt{ \frac{1 + m \sqrt{2 - m^2}}{2} }  \qquad & \;\; m \in \lrp{-1, 1} \\
m \qquad  & \;\; m \geq 1
\end{cases}$} 
   & \multirow{2}{*}{$\begin{cases} 
   (\vx_{i}^\top \sigma) \sqrt{ 2 - (\vx_{i}^\top \sigma)^2 } \qquad & \;\; | \vx_{i}^\top \sigma | \leq 1 \\
   \sgn(\vx_{i}^\top \sigma) \qquad & \;\; | \vx_{i}^\top \sigma | > 1
   \end{cases}$} \\ \cline{2-2}
 & $\ell_{+} (g) = 1 - \sqrt{\frac{1 + g}{2}}$ & & & \\ \hline
   \multirow{2}{*}{``AdaBoost"} & $\ell_{-} (g) = \sqrt{\frac{1 + g}{1 - g}}$  & \multirow{2}{*}{$\frac{2 g}{\sqrt{1 - g^2}}$} 
   & \multirow{2}{*}{$\sqrt{\frac{\sqrt{m^2 + 4} + m}{\sqrt{m^2 + 4} - m}} + \sqrt{\frac{\sqrt{m^2 + 4} - m}{\sqrt{m^2 + 4} + m}}$} 
   & \multirow{2}{*}{$\frac{\vx_{i}^\top \sigma}{\sqrt{(\vx_{i}^\top \sigma)^2 + 4}}$} \\ \cline{2-2}
 & $\ell_{+} (g) = \sqrt{\frac{1 - g}{1 + g}}$ & & & \\ \hline
    \multirow{2}{*}{Sigmoid} & $\ell_{-} (g) = \frac{1}{1 + e^{-g}}$  & \multirow{2}{*}{$\frac{e^g - 1}{e^g + 1}$} 
   & \multirow{2}{*}{$\max \lrp{1 , \abs{m} + \frac{2}{1 + e} }$} 
   & \multirow{2}{*}{$\text{clip} \lrp{\ln \lrp{\frac{1 + m}{1 - m}}}$} \\ \cline{2-2}
 & $\ell_{+} (g) = \frac{1}{1 + e^{g}}$ & & & \\ \hline
\end{tabular}
\label{tab:alllosses}
}
\caption{Some binary classification losses, as in Sec. \ref{sec:examplelosses}. 
For convenience, we write $\text{clip} (x) = \min(1, \max( -1, x))$.}
\end{table}

\newpage
\section{Constraints on General Losses for Binary Classification}
\label{sec:genlossconstr}

In all other sections of the paper, 
we allow the evaluation function of the game to be a general loss, 
but assume that the constraints (our information about the ensemble) are in terms of zero-one loss as written in \eqref{eq:errconstr}. 
However, here we relax that assumption, allowing each classifier $h_i$ to constrain the test labels $\vz$ 
not with the zero-one loss of $h_i$'s predictions, but rather with some other general loss. 

This is possible because when the true labels are binary, all the losses we consider are linear in $\vz$, as seen in \eqref{eq:linearinz}: 
$\displaystyle \ell (\vz, \vg) = \frac{1}{n} \sum_{j=1}^{n} \frac{1}{2} \lrb{ \ell_{+} (g_j) + \ell_{-} (g_j) } - \frac{1}{2 n} \vz^\top [\Gamma (\vg)] $. 
Accordingly, recall that $\vh_i \in [-1,1]^n$ is the vector of test predictions of classifier $h_i$. 
Suppose we have an upper bound on the generalization loss of $h_i$, i.e. $\ell (\vz, \vh_i) \leq \epsilon_i^{\ell}$. 
If we define 
$
b_i^{\ell} := \frac{1}{n} \sum_{j=1}^{n} \lrb{ \ell_{+} (h_i (x_j)) + \ell_{-} (h_i (x_j)) } - 2 \epsilon_i^{\ell}
$, 
then we can use the definition of $\ell (\vz, \vg)$ to write
\begin{align}
\label{eq:defoflbound}
\ell (\vz, \vh_i) \leq \epsilon_i^{\ell}
\qquad \iff \qquad
\frac{1}{n} \vz^\top [\Gamma (\vh_i)] \geq b_i^{\ell}
\end{align}
Now \eqref{eq:defoflbound} is a linear constraint on $\vz$, 
just like each of the error constraints earlier considered in \eqref{eq:errconstr}. 
We can derive an aggregation algorithm with constraints like \eqref{eq:defoflbound}, 
using essentially the same analysis as employed in \Cref{sec:binclass} to solve the game \eqref{game1eq}. 
As mentioned in Sec. \ref{sec:genlossconstrbody},

\subsection*{Matching Objective and Constraint Losses}
Though the ensemble constraints can be completely heterogeneous, we focus on a special case of them in the rest of this section to glean intuition. 
Suppose when each classifier contributes exactly one constraint to the problem, 
and the losses used for these constraints are all the same as each other and as the loss $\ell$ used in the objective function. 
In other words, the minimax prediction problem we now consider is
\begin{align}
\label{eq:game1eqgen} 
V^{\ell} &:= \min_{\vg \in [-1,1]^n} \; \max_{\substack{ \vz \in [-1,1]^n , \\ \forall i \in [p] : \;\ell (\vz, \vh_i) \leq \epsilon_i^{\ell} }} \; \ell (\vz, \vg)
= \min_{\vg \in [-1,1]^n} \; \max_{\substack{ \vz \in [-1,1]^n , \\ \forall i \in [p] : \;\frac{1}{n} \vz^\top [\Gamma (\vh_i)] \geq b_i^{\ell} }} \; \ell (\vz, \vg)
\end{align}

The matrix $\vF$ and the slack function from \eqref{eq:defoff} are therefore redefined:
\begin{align*}
F_{ij}^{\ell} := \Gamma (h_i (x_j)) = \ell_{-} (h_i (x_j)) - \ell_{+} (h_i (x_j)) 
\end{align*}
\begin{align*}
\gamma^{\ell} (\sigma, \vb^{\ell}) := \gamma^{\ell} (\sigma) 
:= - [\vb^{\ell}]^\top \sigma + \frac{1}{n} \sum_{j=1}^n \Psi \lrp{ [\Gamma (\vx_j) ]^\top \sigma }
\end{align*}
where $\vb^{\ell} = (b_1^{\ell} , \dots, b_p^{\ell})^\top$ and the vectors $\vx_j$ are now from the new unlabeled data matrix $F_{ij}^{\ell}$.
The game \eqref{eq:game1eqgen} is clearly of the same form as the earlier formulation \eqref{game1eq}. 
Therefore, its solution has the same structure as in Theorem \ref{thm:gamesolngen}, proved using that theorem's proof:

\begin{theorem}
\label{thm:solnmatloss}
The minimax value of the game \eqref{eq:game1eqgen} is 
$V := \frac{1}{2} \gamma^{\ell} (\sigma^{\ell*}) := \min_{\sigma \geq \vzero^p} \frac{1}{2} \gamma^{\ell} (\sigma)$. 
The minimax optimal predictions are defined as follows:
for all $j \in [n]$,
\begin{align*}
g_j^* := g_j (\sigma^*) = 
\begin{cases} 
-1  \qquad & \mbox{ \; if \; } [\Gamma (\vx_j) ]^\top \sigma^{\ell*} \leq \Gamma (-1) \\ 
\Gamma^{-1} \lrp{ [\Gamma (\vx_j) ]^\top \sigma^{\ell*} } \qquad & \mbox{ \; if \; } [\Gamma (\vx_j) ]^\top \sigma^{\ell*} \in \lrp{ \Gamma (-1) , \Gamma (1)} \\ 
1  & \mbox{ \; if \; } [\Gamma (\vx_j) ]^\top \sigma^{\ell*} \geq \Gamma (1)
\end{cases}
\end{align*}
\end{theorem}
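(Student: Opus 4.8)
The plan is to reduce the game \eqref{eq:game1eqgen} directly to the already-solved game \eqref{game1eq} by recognizing that it is an instance of the same minimax problem with a relabeled data matrix. The key observation is that the constraints $\frac{1}{n}\vz^\top[\Gamma(\vh_i)] \geq b_i^{\ell}$ are precisely the constraints $\frac{1}{n}\vF^{\ell}\vz \geq \vb^{\ell}$ where $\vF^{\ell}$ is the matrix with entries $F_{ij}^{\ell} = \Gamma(h_i(x_j))$, and its columns are the vectors $\vx_j$ referenced in \eqref{eqn:genslack}. Meanwhile, the objective $\ell(\vz,\vg)$ is entirely unchanged — it depends on $\vg$ and $\vz$ through the partial losses exactly as in \eqref{game1eq}. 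So \eqref{eq:game1eqgen} is literally the game \eqref{game1eq} with $\vF$ replaced by $\vF^{\ell}$ and $\vb$ by $\vb^{\ell}$.

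The one subtlety I would flag — and the main point that needs checking — is that the proof of \Cref{thm:gamesolngen} nowhere used the fact that the entries of $\vF$ lie in $[-1,1]$; it used only that $\vF$ is a fixed real matrix of appropriate dimensions, together with Assumption~\ref{ass:loss} on the loss (via the behavior of $\Gamma$, $\Gamma^{-1}$, and $\Psi$). Indeed, inspecting the proof: Lemma~\ref{lem:gamegeng} (the inner Lagrange dual) is stated for arbitrary $\va \in \RR^n$ and an arbitrary constraint matrix; the manipulations in \eqref{eq:gencvxaddloss}--\eqref{eq:gencvxgame} rearrange a min-min and split an $L_1$ norm, neither step caring about the range of $\vF$'s entries; and the case analysis reducing the inner one-dimensional minimization to $\Psi(\vx_j^\top\sigma)$ and $g_j^*$ to the clipped sigmoid \eqref{eq:gipredform} depends only on $\Gamma$ being increasing on $[-1,1]$ and on $g_j$ ranging over $[-1,1]$. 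Here $\Gamma(\vx_j)^\top\sigma$ plays the role that $\vx_j^\top\sigma$ played before. So the only thing to verify is that the "$\vx_j$" appearing as the argument of $\Psi$ and in the thresholds for $g_j^*$ is now the column of the \emph{redefined} matrix $\vF^{\ell}$, i.e. the vector with coordinates $\Gamma(h_i(x_j))$, which is exactly what the statement asserts via the notation $[\Gamma(\vx_j)]^\top\sigma^{\ell*}$.

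Concretely, the steps I would carry out are: (i) record that \eqref{eq:defoflbound} rewrites each loss-bound constraint as the linear constraint $\frac{1}{n}\vz^\top[\Gamma(\vh_i)] \geq b_i^{\ell}$, so the feasible set for the adversary in \eqref{eq:game1eqgen} equals $\{\vz \in [-1,1]^n : \frac{1}{n}\vF^{\ell}\vz \geq \vb^{\ell}\}$ with $\vF^{\ell}$, $\vb^{\ell}$ as defined; (ii) observe that the objective $\ell(\vz,\vg)$ is unchanged; (iii) invoke the proof of \Cref{thm:gamesolngen} verbatim with the substitutions $\vF \mapsto \vF^{\ell}$, $\vb \mapsto \vb^{\ell}$, noting that that proof uses no property of the data matrix beyond its being a fixed real matrix — in particular the entry-range $[-1,1]$ is never invoked. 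This yields $V = \frac12 \min_{\sigma \geq \vzero^p}\gamma^{\ell}(\sigma)$ with $\gamma^{\ell}$ as in \eqref{eqn:genslack}, and the optimal predictions in the clipped-sigmoid form with argument $[\Gamma(\vx_j)]^\top\sigma^{\ell*}$.

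The main "obstacle" is therefore not a genuine difficulty but a bookkeeping one: one must state carefully that \Cref{thm:gamesolngen}'s argument is agnostic to the range of the matrix entries, so that replacing $\vF \in [-1,1]^{p\times n}$ by $\vF^{\ell}$ (whose entries $\Gamma(h_i(x_j)) = \ell_-(h_i(x_j)) - \ell_+(h_i(x_j))$ may lie outside $[-1,1]$) does not affect any step. Given that, the theorem follows immediately. I would present this as a short remark-style proof rather than reproducing the computation.
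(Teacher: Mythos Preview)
Your proposal is correct and matches the paper's approach exactly: the paper states only that \eqref{eq:game1eqgen} ``is clearly of the same form as the earlier formulation \eqref{game1eq}'' and is ``proved using that theorem's proof,'' i.e., by re-running the argument of \Cref{thm:gamesolngen} with $\vF$ and $\vb$ replaced by $\vF^{\ell}$ and $\vb^{\ell}$. Your explicit check that the proof of \Cref{thm:gamesolngen} never uses the entry range $[-1,1]$ of $\vF$ is a welcome clarification, but otherwise the arguments coincide.
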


This provides a concise characterization of how to solve the semi-supervised binary classification game for general losses. 
Though on the face of it \Cref{thm:solnmatloss} is a much stronger result than even \Cref{thm:gamesolngen}, 
we cannot overlook statistical issues. 
The loss bounds $\epsilon_i^{\ell}$ on each classifier are estimated using a uniform convergence bound over the ensemble with loss $\ell$, 
but the data now considered are not the ensemble predictions, but the predictions passed through function $\Gamma$. 
This can be impractical for losses like log loss, for which $\Gamma$ is unbounded, and therefore uniform convergence to estimate $b_i^{\ell}$ in \eqref{eq:defoflbound} is much less applicable than for 0-1 loss. 

But such issues are outside our scope here, and our constrained minimax results hold in any case given $\vb$. 
They may be useful to obtain semi-supervised learnability results for different losses from tighter statistical characterizations, 
which we consider an interesting open problem.

\section{Proofs and Supporting Results}
\label{sec:pfgame1}

\subsection{Proof of Theorem \ref{thm:gamesolngen}}
\label{sec:proofdisc}

The main hurdle here is the constrained maximization over $\vz$. 
For this we use the following result, a basic application of Lagrange duality 
(from \cite{BF15}, but proved below for completeness). 
\begin{lemma}
\label{lem:gamegeng}
For any $\va \in \RR^n$,
\begin{align*}
\max_{\substack{ \vz \in [-1,1]^n , \\ \frac{1}{n} \vF \vz \geq \vb }} \;\;\frac{1}{n} \vz^\top \va 
\;=\; \min_{\sigma \geq \vzero^p} \left[ - \vb^\top \sigma + \frac{1}{n} \vnorm{\vF^\top \sigma + \va}_1 \right]
\end{align*}
\end{lemma}

With this lemma, we prove the main theorem of this paper. 

\begin{proof}[Proof of \Cref{thm:gamesolngen}]
First note that $\ell (\vz, \vg)$ is linear in $\vz$, 
$$ V = \eqref{eq:linearinz} = \frac{1}{2} \min_{\vg \in [-1,1]^n} \; \max_{\substack{ \vz \in [-1,1]^n , \\ \frac{1}{n} \vF \vz \geq \vb }} \;\; 
\frac{1}{n} \sum_{j=1}^{n} \lrb{ \ell_{+} (g_j) + \ell_{-} (g_j) + z_j \lrp{ \ell_{+} (g_j) - \ell_{-} (g_j) } } $$
Here we can rewrite the constrained maximization over $\vz$ using Lemma \ref{lem:gamegeng}: 
\begin{align}
\label{eq:innerdual}
\max_{\substack{ \vz \in [-1,1]^n , \\ \frac{1}{n} \vF \vz \geq \vb }} \;\; \frac{1}{n} \sum_{i=1}^{n} z_j \lrp{ \ell_{+} (g_j) - \ell_{-} (g_j) } 
&= \max_{\substack{ \vz \in [-1,1]^n , \\ \frac{1}{n} \vF \vz \geq \vb }} \;\; - \frac{1}{n} \vz^\top [\Gamma (\vg) ] \nonumber \\
&= \min_{\sigma \geq \vzero^p} \left[ - \vb^\top \sigma + \frac{1}{n} \vnorm{\vF^\top \sigma - \Gamma (\vg)}_1 \right]
\end{align} 

Substituting \eqref{eq:innerdual} into \eqref{eq:linearinz} and simplifying, 
\begin{align}
V 
\label{eq:gencvxaddloss}
&= \frac{1}{2} \min_{\vg \in [-1,1]^n} \; \lrb{ \frac{1}{n} \sum_{j=1}^{n} \lrb{ \ell_{+} (g_j) + \ell_{-} (g_j) } + 
\max_{\substack{ \vz \in [-1,1]^n , \\ \frac{1}{n} \vF \vz \geq \vb }} \;\;  \frac{1}{n} \sum_{j=1}^{n} z_j \lrp{ \ell_{+} (g_j) - \ell_{-} (g_j) } } \\
&= \frac{1}{2} \min_{\vg \in [-1,1]^n} \; \lrb{ \frac{1}{n} \sum_{j=1}^{n} \lrb{ \ell_{+} (g_j) + \ell_{-} (g_j) } + 
\min_{\sigma \geq \vzero^p} \left[ - \vb^\top \sigma + \frac{1}{n} \vnorm{\vF^\top \sigma - \Gamma (\vg) }_1 \right] } \nonumber \\
\label{eq:gencvxineffgame}
&= \frac{1}{2} \min_{\sigma \geq \vzero^p} \lrb{ - \vb^\top \sigma + \min_{\vg \in [-1,1]^n} 
\lrb{ \frac{1}{n} \sum_{j=1}^{n} \lrb{ \ell_{+} (g_j) + \ell_{-} (g_j) } + \frac{1}{n} \vnorm{\vF^\top \sigma - \Gamma (\vg)}_1 } } \\
\label{eq:gencvxgame}
&= \frac{1}{2} \min_{\sigma \geq \vzero^p} \lrb{ - \vb^\top \sigma + 
\frac{1}{n} \sum_{j=1}^{n} \min_{g_j \in [-1,1]} \lrb{ \ell_{+} (g_j) + \ell_{-} (g_j) + \left| \vx_j^\top \sigma - \Gamma (g_j) \right| } }
\end{align}
The absolute value breaks down into two cases, so the inner minimization's objective can be simplified:
\begin{align}
\label{eq:gencvxmmand}
\ell_{+} (g_j) + \ell_{-} (g_j) + \left| \vx_j^\top \sigma - \Gamma (g_j) \right| 
= \begin{cases} 
2 \ell_{+} (g_j) + \vx_j^\top \sigma  \qquad & \mbox{ \; if \; } \vx_j^\top \sigma \geq \Gamma (g_j) \\ 
2 \ell_{-} (g_j) - \vx_j^\top \sigma  & \mbox{ \; if \; } \vx_j^\top \sigma < \Gamma (g_j)
\end{cases}
\end{align}

Suppose $g_j$ falls in the first case, so that $\vx_j^\top \sigma \geq \Gamma (g_j)$. 
From Assumption \ref{ass:loss}, $2 \ell_{+} (g_j) + \vx_j^\top \sigma$ is decreasing in $g_j$, 
so it is minimized for the greatest $g_j^* \leq 1$ s.t. $\Gamma (g_j^*) \leq \vx_j^\top \sigma$. 
Since $\Gamma (\cdot)$ is increasing, exactly one of two subcases holds: 
\begin{enumerate}[noitemsep]
\item
$g_j^*$ is such that $\Gamma (g_j^*) = \vx_j^\top \sigma$, in which case the minimand \eqref{eq:gencvxmmand} 
is $\ell_{+} (g_j^*) + \ell_{-} (g_j^*)$
\item
$g_j^* = 1$ so that $\Gamma (g_j^*) = \Gamma (1) < \vx_j^\top \sigma$, 
in which case the minimand \eqref{eq:gencvxmmand} is $2 \ell_{+} (1) + \vx_j^\top \sigma$
\end{enumerate}

A precisely analogous argument holds if $g_j$ falls in the second case, where $\vx_j^\top \sigma < \Gamma (g_j)$. 
Putting the cases together, we have shown the form of the summand $\Psi$, piecewise over its domain, so 
\eqref{eq:gencvxgame} is equal to $ \frac{1}{2} \min_{\sigma \geq \vzero^p} \lrb{\gamma(\sigma)}$.

We have proved the dependence of $g_j^*$ on $\vx_j^\top \sigma^*$, 
where $\sigma^*$ is the minimizer of the outer minimization of \eqref{eq:gencvxgame}. 
This completes the proof.
\end{proof}

\begin{proof}[Proof of Lemma \ref{lem:gamegeng}]
We have
\begin{align}
\max_{\substack{ \vz \in [-1,1]^n , \\ \vF \vz \geq n \vb }} \;\; \frac{1}{n} \vz^\top \va 
&= \frac{1}{n} \max_{\vz \in [-1,1]^n} \; \min_{\sigma \geq \vzero^p} \;  \lrb{ \vz^\top \va + \sigma^\top (\vF \vz - n \vb) } \nonumber \\
\label{eq:duallagrange}
&\stackrel{(a)}{=} \frac{1}{n} \min_{\sigma \geq \vzero^p} \; \max_{\vz \in [-1,1]^n} \; \lrb{ \vz^\top (\va + \vF^\top \sigma) - n \vb^\top \sigma } \\
\label{eq:gameinnerdual}
&= \frac{1}{n} \min_{\sigma \geq \vzero^p} \; \lrb{ \vnorm{\va + \vF^\top \sigma}_1 - n \vb^\top \sigma } 
= \min_{\sigma \geq \vzero^p} \left[ - \vb^\top \sigma + \frac{1}{n} \vnorm{\vF^\top \sigma + \va}_1 \right]
\end{align}
where $(a)$ is by the minimax theorem (\cite{S58}). 
\end{proof}

\subsection{Other Proofs}

\begin{lemma}
\label{lem:cvxpotconds}
The function 
$\ell_{+} (\Gamma^{-1} (m)) + \ell_{-} (\Gamma^{-1} (m))$ is convex for $m \in \lrp{ \Gamma (-1) , \Gamma (1)}$ 
under any of the conditions of \Cref{lem:cvxpotential}.
\end{lemma}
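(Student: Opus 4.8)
The plan is to reduce the convexity of $\phi(m) := \ell_{+}(\Gamma^{-1}(m)) + \ell_{-}(\Gamma^{-1}(m))$ on the open interval $\lrp{\Gamma(-1), \Gamma(1)}$ to a clean second-derivative condition, and then verify that each of (A), (B), (C) implies it. First I would compute $\phi'(m)$ using the chain rule together with the inverse function theorem. Writing $g = \Gamma^{-1}(m)$, so that $\frac{dg}{dm} = \frac{1}{\Gamma'(g)} = \frac{1}{\ell_{-}'(g) - \ell_{+}'(g)}$ (which is positive and finite on the interior by Assumption~\ref{ass:loss}, since $\ell_{+}$ is strictly decreasing and $\ell_{-}$ strictly increasing there, at least generically -- the pseudoinverse footnote handles the degenerate case), we get
\begin{align*}
\phi'(m) = \frac{\ell_{+}'(g) + \ell_{-}'(g)}{\ell_{-}'(g) - \ell_{+}'(g)}.
\end{align*}
The key observation is that this already shows $-1 \le \phi'(m) \le 1$, recovering the $1$-Lipschitz claim of the lemma, and that $\phi' $ is an increasing function of the single real quantity $t := \frac{\ell_{+}'(g)}{\ell_{-}'(g)} \in (-\infty, 0)$, namely $\phi'(m) = \frac{t+1}{1-t}$, which is increasing in $t$.

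Next I would differentiate once more. Since $\phi'$ is a fixed increasing function of $t(g) = \ell_{+}'(g)/\ell_{-}'(g)$, and $g = \Gamma^{-1}(m)$ is an increasing function of $m$, the composition $\phi'(m)$ is increasing in $m$ -- i.e. $\phi$ is convex -- if and only if $t(g)$ is increasing in $g$. So the whole lemma comes down to: $\frac{d}{dg}\lrp{\frac{\ell_{+}'(g)}{\ell_{-}'(g)}} \ge 0$ on $(-1,1)$. By the quotient rule this derivative has the same sign as $\ell_{+}''(g)\,\ell_{-}'(g) - \ell_{+}'(g)\,\ell_{-}''(g)$, and since $\ell_{-}'(g) > 0$ we may as well write the condition as $\ell_{-}'(g)\,\ell_{+}''(g) \ge \ell_{+}'(g)\,\ell_{-}''(g)$ for all $g \in (-1,1)$. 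This is exactly condition (C), which proves (C)$\implies$ convexity (and, read backwards, shows (C) is also necessary, giving the parenthetical remark after \Cref{lem:cvxpotential}).

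It then remains to show (A)$\implies$(C) and (B)$\implies$(C). For (A): if $\ell_{+}$ and $\ell_{-}$ are convex, then $\ell_{+}'' \ge 0$ and $\ell_{-}'' \ge 0$; combined with $\ell_{-}' > 0 > \ell_{+}'$ (Assumption~\ref{ass:loss}) we get $\ell_{-}'\ell_{+}'' \ge 0 \ge \ell_{+}'\ell_{-}''$, which is (C). For (B), properness: by definition $\argmin_{g} \ell(z,g) = z$, so differentiating $\ell(z,g) = \frac{1+z}{2}\ell_{+}(g) + \frac{1-z}{2}\ell_{-}(g)$ in $g$ and setting the stationarity condition at $g = z$ gives $(1+z)\ell_{+}'(z) + (1-z)\ell_{-}'(z) = 0$, i.e. $\frac{\ell_{+}'(z)}{\ell_{-}'(z)} = -\frac{1-z}{1+z}$ for all $z \in (-1,1)$; the right-hand side is manifestly increasing in $z$, so $t(g)$ is increasing, which is (C). (One should note $t(g) = \phi'$ precomposed argument is well-defined here since properness plus Assumption~\ref{ass:loss} forces $\ell_\pm'$ nonzero on the interior.)

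The main obstacle is bookkeeping at the boundary and in degenerate cases: making sure $\Gamma'$ does not vanish on the open interval so that $\Gamma^{-1}$ is differentiable (handled by Assumption~\ref{ass:loss}'s strict monotonicity, or via the pseudoinverse and a limiting/monotone-rearrangement argument as the footnote suggests), checking that $\phi$ extends continuously to the closed interval and matches the linear pieces of $\Psi$ at $m = \Gamma(\pm 1)$ with matching one-sided derivatives (so that the global $\Psi$ is convex, not just $\phi$), and confirming the sign conventions $\ell_{-}' > 0$, $\ell_{+}' < 0$ are used consistently. None of these is deep, but the cleanest writeup packages the first two paragraphs as a standalone computation of $\phi'$ and reduces everything to the single scalar monotonicity of $\ell_{+}'/\ell_{-}'$.
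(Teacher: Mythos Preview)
Your proposal is correct and follows the same overall architecture as the paper: compute the first derivative $\phi'(m) = \frac{\ell_+'(g) + \ell_-'(g)}{\ell_-'(g) - \ell_+'(g)}$ with $g = \Gamma^{-1}(m)$, reduce convexity to condition (C), and then verify (A)$\Rightarrow$(C) and (B)$\Rightarrow$(C). The sign analysis for (A) is identical to the paper's.

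The one genuine difference is your intermediate step. The paper differentiates $\phi'$ directly via the quotient rule and simplifies the resulting expression to a positive factor times $2(\ell_-'\ell_+'' - \ell_-''\ell_+')$, which is (C). You instead factor $\phi'$ through the scalar $t = \ell_+'/\ell_-'$ as $\phi' = \frac{t+1}{1-t}$, observe this is strictly increasing in $t$, and conclude that convexity is equivalent to monotonicity of $g \mapsto \ell_+'(g)/\ell_-'(g)$. This is a cleaner reduction, and it pays off most visibly in the proper-loss case: the paper, having committed to verifying the inequality $\ell_-'\ell_+'' \ge \ell_-''\ell_+'$ literally, introduces the weight function $w(x) = \ell_-'(x)/(1+x) = -\ell_+'(x)/(1-x)$ and checks by direct computation that $\ell_-'\ell_+'' - \ell_-''\ell_+' = 2w^2 \ge 0$; you simply read off $t(g) = -\frac{1-g}{1+g}$ from the stationarity condition and note it is increasing. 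Both arguments are short, but yours avoids the algebra entirely. The tradeoff is that the paper's direct second-derivative route needs no extra words about why monotonicity of $t$ in $g$ is equivalent to monotonicity of $\phi'$ in $m$ (via the strictly increasing change of variable $g = \Gamma^{-1}(m)$), whereas you should state that step explicitly.
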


\begin{proof}[Proof of Lemma \ref{lem:cvxpotconds}] 
Define $F(m) = \ell_{+} (\Gamma^{-1} (m)) + \ell_{-} (\Gamma^{-1} (m))$. 
By basic properties of the derivative, 
\begin{align}
\label{eq:ginvderiv}
\deriv{\Gamma^{-1}}{m} = \frac{1}{\Gamma' ( \Gamma^{-1} (m))} = \frac{1}{\ell_{-}' ( \Gamma^{-1} (m)) - \ell_{+}' ( \Gamma^{-1} (m))} \geq 0 
\end{align}
where the last inequality follows by Assumption \ref{ass:loss}. Therefore, by the chain rule and \eqref{eq:ginvderiv}, 
\begin{align}
\label{eq:psideriv}
F' (m) = \frac{\ell_{-}' ( \Gamma^{-1} (m)) + \ell_{+}' ( \Gamma^{-1} (m))}{\ell_{-}' ( \Gamma^{-1} (m)) - \ell_{+}' ( \Gamma^{-1} (m))} 
\end{align}
From this, we calculate $F'' (m)$, writing $\ell'_{\pm} (\Gamma^{-1} (m))$ and $\ell''_{\pm} (\Gamma^{-1} (m))$ 
as simply $\ell'_{\pm}$ and $\ell''_{\pm}$ for clarity: 
$$ F'' (m) = \underbrace{ \frac{\deriv{\Gamma^{-1}}{m}}{\lrp{\ell_{-}' ( \Gamma^{-1} (m)) - \ell_{+}' ( \Gamma^{-1} (m))}^2} }_{(a)} 
\lrb{ \lrp{\ell_{-}' - \ell_{+}' }\lrp{\ell_{-}'' + \ell_{+}''} - \lrp{\ell_{-}' + \ell_{+}' }\lrp{\ell_{-}'' - \ell_{+}''} }
$$
From \eqref{eq:ginvderiv}, observe that the term 
$(a) = \lrp{\ell_{-}' ( \Gamma^{-1} (m)) - \ell_{+}' ( \Gamma^{-1} (m))}^{-3} \geq 0$. 
Therefore, it suffices to show that the other term is $\geq 0$. 
But this is equal to 
\begin{align}
\label{eq:lastderivpos}
\lrp{\ell_{-}' - \ell_{+}' }\lrp{\ell_{-}'' + \ell_{+}''} - \lrp{\ell_{-}' + \ell_{+}' }\lrp{\ell_{-}'' - \ell_{+}''} = 2 (\ell_{-}' \ell_{+}'' - \ell_{-}'' \ell_{+}') 
\end{align} 
This proves that condition (C) of \Cref{lem:cvxpotential} is sufficient for convexity of $F$ 
(and necessary also, under \Cref{ass:loss} on the partial losses). 

We now address the other conditions of \Cref{lem:cvxpotential}. 
(A) implies (C), because by Assumption \ref{ass:loss}, 
$\ell_{-}' , \ell_{+}'' , \ell_{-}''$ are $\geq 0$ and $\ell_{+}' \leq 0$, 
so \eqref{eq:lastderivpos} is $\geq 0$ as desired. 

Finally we prove that (B) implies (C). 
If $\ell$ is proper, then it is well known (e.g. Thm. 1 of \cite{RW10}, and \cite{BSS05}) that for all $x \in (-1,1)$,
$$ \frac{\ell_{-}' (x)}{1+x} = - \frac{\ell_{+}' (x)}{1-x} $$
(This is a simple and direct consequence of stationary conditions from the properness definition.)

Define the function $w(x) = \frac{\ell_{-}' (x)}{1+x} = - \frac{\ell_{+}' (x)}{1-x}$; 
we drop the argument and write it and its derivative as $w$ and $w'$ for clarity. 
By direct computation,  
\begin{align*}
\ell_{-}' \ell_{+}'' - \ell_{-}'' \ell_{+}' 
&= \lrb{ (1+x) w \lrp{w + (x-1) w'} } - \lrb{ (w + (1+x) w') (x-1) w } \\
&= \lrb{ (1+x) w^2 + (x^2 - 1) w w' } - \lrb{ (x-1) w^2 + (x^2 - 1) w w' } = 2 w^2 \geq 0
\end{align*}
so (C) is true as desired.
\end{proof}

\begin{proof}[Proof of Lemma \ref{lem:cvxpotential}]
Continuity follows by checking $\Psi(m)$ at $m = \pm 1$. 

For Lipschitzness, note that for $m \in \lrp{ \Gamma (-1) , \Gamma (1)}$, by \eqref{eq:psideriv}, 
\begin{align}
\Psi' (m)
&= \frac{\ell_{-}' ( \Gamma^{-1} (m)) + \ell_{+}' ( \Gamma^{-1} (m))}{\ell_{-}' ( \Gamma^{-1} (m)) - \ell_{+}' ( \Gamma^{-1} (m))} \\
\label{eq:psiderivlb}
&= -1 + \frac{2 \ell_{-}' ( \Gamma^{-1} (m)) }{\ell_{-}' ( \Gamma^{-1} (m)) - \ell_{+}' ( \Gamma^{-1} (m))} \\
\label{eq:psiderivub}
&= 1 - \frac{2 (- \ell_{+}' ( \Gamma^{-1} (m)) ) }{\ell_{-}' ( \Gamma^{-1} (m)) - \ell_{+}' ( \Gamma^{-1} (m))} 
\end{align}
Using Assumption \ref{ass:loss} on the partial losses, 
equations \eqref{eq:psiderivlb} and \eqref{eq:psiderivub} respectively make clear that $\Psi' (m) \geq -1$ 
and $\Psi' (m) \leq 1$ on this interval.
Since $\Psi' (m)$ is $-1$ for $m < \Gamma (-1)$ 
and $1$ for $m > \Gamma (1)$, it is 1-Lipschitz. 

As for convexity, since $\Psi$ is linear outside the interval $\lrp{ \Gamma (-1) , \Gamma (1)}$, 
it suffices to show that $\Psi(m)$ is convex inside this interval, which is shown in \Cref{lem:cvxpotconds}. 
\end{proof}

\subsection{Results and Proofs from \Cref{sec:extensions}}

\begin{proof}[Proof of \Cref{thm:weighteval}]
The proof is similar to that of \Cref{thm:gamesolngen}, which it generalizes.
First note that 
\begin{align}
\label{eq:weightevalduality}
\max_{\substack{ \vz \in [-1,1]^n , \\ \frac{1}{n} \vF \vz \geq \vb }} \;\; \frac{1}{n} \sum_{i=1}^{n} r_j z_j \lrp{ \ell_{+} (g_j) - \ell_{-} (g_j) } 
&= \max_{\substack{ \vz \in [-1,1]^n , \\ \frac{1}{n} \vF \vz \geq \vb }} \;\; - \frac{1}{n} \vz^\top [\vr \circ \Gamma (\vg) ] \nonumber \\
&= \min_{\sigma \geq \vzero^p} \left[ - \vb^\top \sigma + \frac{1}{n} \vnorm{\vF^\top \sigma - (\vr \circ \Gamma (\vg)) }_1 \right]
\end{align}
where the last equality uses \Cref{lem:gamegeng}. 

Therefore, using \eqref{eq:weightevalduality} on the left-hand side of what we wish to prove, 
\begin{align}
\label{eq:genweightevalgame}
V 
&= \frac{1}{2} \min_{\vg \in [-1,1]^n} \; \lrb{ \frac{1}{n} \sum_{j=1}^{n} r_j \lrb{ \ell_{+} (g_j) + \ell_{-} (g_j) } + 
\max_{\substack{ \vz \in [-1,1]^n , \\ \frac{1}{n} \vF \vz \geq \vb }} \;\;  \frac{1}{n} \sum_{i=1}^{n} r_j z_j \lrp{ \ell_{+} (g_j) - \ell_{-} (g_j) } } \nonumber \\
&= \frac{1}{2} \min_{\vg \in [-1,1]^n} \; \lrb{ \frac{1}{n} \sum_{j=1}^{n} r_j \lrb{ \ell_{+} (g_j) + \ell_{-} (g_j) } + 
\min_{\sigma \geq \vzero^p} \left[ - \vb^\top \sigma + \frac{1}{n} \sum_{j=1}^n \abs{\vx_j^\top \sigma - r_j \Gamma (g_j)} \right] } \nonumber \\
&= \frac{1}{2} \min_{\sigma \geq \vzero^p} \lrb{ - \vb^\top \sigma + \frac{1}{n} \sum_{j=1}^{n} 
\min_{g_j \in [-1,1]} \lrp{ r_j \lrb{ \ell_{+} (g_j) + \ell_{-} (g_j) } + \abs{\vx_j^\top \sigma - r_j \Gamma (g_j)}} }
\end{align}

As in the proof of \Cref{thm:gamesolngen}, the inner minimization's objective can be simplified:
\begin{align}
\label{eq:gencvxmmandwt}
r_j (\ell_{+} (g_j) + \ell_{-} (g_j)) + \left| \vx_j^\top \sigma - r_j \Gamma (g_j) \right| 
= \begin{cases} 
2 r_j \ell_{+} (g_j) + \vx_j^\top \sigma  \qquad & \mbox{ \; if \; } \vx_j^\top \sigma \geq r_j \Gamma (g_j) \\ 
2 r_j \ell_{-} (g_j) - \vx_j^\top \sigma  & \mbox{ \; if \; } \vx_j^\top \sigma < r_j \Gamma (g_j)
\end{cases}
\end{align}

Suppose $g_j$ falls in the first case, so that $\vx_j^\top \sigma \geq r_j \Gamma (g_j)$. 
From Assumption \ref{ass:loss}, $2 r_j \ell_{+} (g_j) + \vx_j^\top \sigma$ is decreasing in $g_j$, 
so it is minimized for the greatest $g_j^* \leq 1$ s.t. $\Gamma (g_j^*) \leq \frac{\vx_j^\top \sigma}{r_j }$. 
Since $\Gamma (\cdot)$ is increasing, exactly one of two subcases holds: 
\begin{enumerate}[a)]
\item
$g_j^*$ is such that $\Gamma (g_j^*) = \frac{\vx_j^\top \sigma}{r_j }$, in which case the minimand \eqref{eq:gencvxmmandwt} 
is $r_j (\ell_{+} (g_j^*) + \ell_{-} (g_j^*))$
\item
$g_j^* = 1$ so that $\Gamma (g_j^*) = \Gamma (1) < \frac{\vx_j^\top \sigma}{r_j }$, 
in which case the minimand \eqref{eq:gencvxmmandwt} is $2 r_j \ell_{+} (1) + \vx_j^\top \sigma$
\end{enumerate}

A precisely analogous argument holds if $g_j$ falls in the second case, where $\vx_j^\top \sigma < \Gamma (g_j)$. 
So as before, we have proved the dependence of $g_j^*$ on $\vx_j^\top \sigma^*$, 
where $\sigma^*$ is the minimizer of the outer minimization of \eqref{eq:genweightevalgame}. 
This completes the proof.
\end{proof}


\begin{theorem}
\label{thm:gamegendev}
For any $\vc \geq \vzero^n$, 
\begin{align*}
\min_{\vg \in [-1,1]^n} \; \max_{\substack{ \vz \in [-1,1]^n , \\ \abs{\frac{1}{n} \vF \vz - \vb} \leq \vc }} \;\ell (\vz, \vg)
\;=\; \min_{\sigma \in \RR^p} \left[ - \vb^\top \sigma + \frac{1}{n} \sum_{j=1}^n \Psi ( \vx_{j}^\top \sigma ) + \vc^\top \abs{\sigma} \right]
\end{align*}
Writing $\sigma_{\vc}^*$ as the minimizer of the RHS above, 
the optimal predictions $\vg^* = \vg (\sigma_{\vc}^*)$, as in \Cref{thm:gamesolngen}. 
\end{theorem} 
\begin{proof}[Proof of Theorem \ref{thm:gamegendev}]
This is proved exactly like \Cref{thm:gamesolngen}, but using Lemma \ref{lem:weightednorminnerdual} instead of Lemma \ref{lem:gamegeng} in that proof.
\end{proof}

\begin{lemma}
\label{lem:weightednorminnerdual}
For any $\va \in \RR^n$ and $\vc \geq \vzero^n$,
\begin{align*}
\max_{\substack{ \vz \in [-1,1]^n , \\ \abs{\frac{1}{n} \vF \vz - \vb} \leq \vc }} \;\frac{1}{n} \vz^\top \va 
\;=\; \min_{\sigma \in \RR^p} \left[ - \vb^\top \sigma + \frac{1}{n} \vnorm{\vF^\top \sigma + \va}_1 + \vc^\top \abs{\sigma} \right]
\end{align*}
\end{lemma}
\begin{proof}
\begin{align*}
\max_{\substack{ \vz \in [-1,1]^n , \\ \abs{\frac{1}{n} \vF \vz - \vb} \leq \vc }} \;\frac{1}{n} \vz^\top \va 
&= \max_{\substack{ \vz \in [-1,1]^n , \\ \frac{1}{n} \vF \vz - \vb \leq \vc , \\ - \frac{1}{n} \vF \vz + \vb \leq \vc }} \;\frac{1}{n} \vz^\top \va \\
&= \frac{1}{n} \;\max_{\vz \in [-1,1]^n} \min_{\lambda, \xi \geq \vzero^p} \lrb{ \vz^\top \va 
+ \lambda^\top (- \vF \vz + n \vb + n \vc) + \xi^\top (\vF \vz - n \vb + n \vc) } \\
&= \frac{1}{n} \;\min_{\lambda, \xi \geq \vzero^p} \max_{\vz \in [-1,1]^n} \lrb{ \vz^\top 
(\va + \vF^\top (\xi - \lambda)) + \lambda^\top ( n \vb + n \vc) + \xi^\top (- n \vb + n \vc) } \\
&= \frac{1}{n} \;\min_{\lambda, \xi \geq \vzero^p} \lrb{  
\vnorm{\va + \vF^\top (\xi - \lambda)}_1 - n \vb^\top (\xi - \lambda) + n \vc^{\top} (\xi + \lambda) }
\end{align*}
where the interchanging of min and max is again justified by the minimax theorem (\cite{S58}), 
since the objective is linear in each variable and one of the constraint sets is closed. 

Suppose for some $j \in [n]$ that $\xi_j > 0$ and $\lambda_j > 0$. 
Then subtracting $\min(\xi_j, \lambda_j)$ from both does not affect the value $[\xi - \lambda]_{j}$, 
but always decreases $[\xi + \lambda]_{j}$, and therefore always decreases the objective function. 
Therefore, we can w.l.o.g. assume that $\forall j \in [n]: \min(\xi_j, \lambda_j) = 0$. 
Defining $\sigma_j = \xi_j - \lambda_j$ for all $j$ (so that $\xi_j = [\sigma_j]_+$ and $\lambda_j = [\sigma_j]_-$), 
the last equality above becomes
\begin{align*}
\frac{1}{n} \;\min_{\lambda, \xi \geq \vzero^p} &\lrb{  
\vnorm{\va + \vF^\top (\xi - \lambda)}_1 - n \vb^\top (\xi - \lambda) + n \vc^{\top} (\xi + \lambda) } \\
&= \frac{1}{n} \;\min_{\sigma \in \RR^p} \lrb{  
\vnorm{\va + \vF^\top \sigma}_1 - n \vb^\top \sigma + n \vc^\top \abs{\sigma} }
\end{align*}
\end{proof}

\end{document}